\newcommand{\PreserveBackslash}[1]{\let\temp=\\#1\let\\=\temp}
\newcolumntype{C}[1]{>{\PreserveBackslash\centering}p{#1}}
\newcolumntype{R}[1]{>{\PreserveBackslash\raggedleft}p{#1}}
\newcolumntype{L}[1]{>{\PreserveBackslash\raggedright}p{#1}}
\newtheorem{assumption}{Assumption}
\newtheorem{theorem}{Theorem}
\newtheorem{corollary}{Corollary}
\newtheorem{lemma}{Lemma}[section]
\newcommand{\Identity}{{\rm I\kern-.2em l}}
\newcommand{\Expect}{\mathbb{E}}
\newcommand{\Expectbracket}[1]{\mathbb{E}\left[ #1 \right]}
\newcommand{\Expectt}[1]{\mathbb{E}_t\left[ #1 \right]}
\newcommand{\Expectcond}[2]{\mathbb{E}\left[\left. #1 \right| #2 \right]}
\newcommand{\x}{\mathbf{x}}
\newcommand{\y}{\mathbf{y}}
\newcommand{\g}{\mathbf{g}}
\newcommand{\z}{\mathbf{z}}
\newcommand{\qmin}{q_\text{min}}
\newcommand{\normsq}[1]{\left\Vert #1 \right\Vert^2}
\newcommand{\innerprod}[1]{\left\langle #1 \right\rangle}
\newcommand{\Pmax}{P_\text{max}}
\newcommand{\iid}{i.i.d.}
\newcommand{\noniid}{non-i.i.d.}
\def\BibTeX{{\rm B\kern-.05em{\sc i\kern-.025em b}\kern-.08em
    T\kern-.1667em\lower.7ex\hbox{E}\kern-.125emX}}
\begin{document}

\title{Communication-Efficient Device Scheduling for Federated Learning Using Lyapunov Optimization}

\author{Jake B. Perazzone, Shiqiang Wang,~\IEEEmembership{Senior Member,~IEEE}, Mingyue Ji,~\IEEEmembership{Member,~IEEE}, \\  Kevin Chan,~\IEEEmembership{Senior Member,~IEEE}
\thanks{This research was partly sponsored by the U.S. Army Research Laboratory and the U.K. Ministry of Defence under Agreement Number W911NF-16-3-0001. The views and conclusions contained in this document are those of the authors and should not be interpreted as representing the official policies, either expressed or implied, of the U.S. Army Research Laboratory, the U.S. Government, the U.K. Ministry of Defence or the U.K. Government. The U.S. and U.K. Governments are authorized to reproduce and distribute reprints for Government purposes notwithstanding any copyright notation hereon.
The work of Mingyue Ji was supported in part by NSF CAREER Award 2145835 and NSF Award 2312227.
A preliminary version of this paper titled ``Communication-Efficient Device Scheduling for Federated Learning Using Stochastic Optimization'' was presented in the IEEE International Conference on Computer Communications (INFOCOM), 2022 \cite{perazzone2022communication}.}%
\thanks{Jake Perazzone and Kevin Chan are with DEVCOM Army Research Laboratory, Adelphi, MD 20783, USA (e-mails: \{jake.b.perazzone.civ; kevin.s.chan.civ\}@army.mil)}%
\thanks{Shiqiang Wang is with IBM T. J. Watson Research Center, Yorktown Heights, NY 10598, USA (e-mail: shiqiang.wang@ieee.org)}%
\thanks{Mingyue Ji is with the Department of Electrical and Computer Engineering, University of Florida, Gainesville, FL 32611, USA (e-mail: mingyue.ji@ufl.edu).}}

\maketitle

\begin{abstract}
Federated learning (FL) is a useful tool that enables the training of machine learning models over distributed data without having to collect data centrally.
When deploying FL in constrained wireless environments, however,  intermittent connectivity of devices, heterogeneous connection quality, and non-i.i.d. data can severely slow convergence.
In this paper, we consider FL with arbitrary device participation probabilities for each round and show that by weighing each device's update by the reciprocal of their per-round participation probability, we can guarantee convergence to a stationary point.
Our bound applies to non-convex loss functions and non-i.i.d. datasets and recovers state-of-the-art convergence rates for both full and uniform partial participation, including linear speedup, with only a single-sided learning rate.
Then, using the derived convergence bound, we develop a new online client selection and power allocation algorithm that utilizes the Lyapunov drift-plus-penalty framework to opportunistically minimize a function of the convergence bound and the average communication time under a transmit power constraint.
We use optimization over manifold techniques to obtain a solution to the minimization problem.
Thanks to the Lyapunov framework, one key feature of the algorithm is that knowledge of the channel distribution is not required and only the instantaneous channel state information needs to be known.
Using the CIFAR-10 dataset with varying levels of data heterogeneity, we show through simulations that the communication time can be significantly decreased using our algorithm compared to uniformly random participation, especially for heterogeneous channel conditions.
\end{abstract}

\begin{IEEEkeywords}
federated learning, Lyapunov stochastic optimization, client selection
\end{IEEEkeywords}

\section{Introduction}
Federated learning (FL) enables the training of machine learning (ML) models over decentralized data.
Instead of transmitting data from all devices to a centralized location, model training is accomplished through a collaborative procedure in which the participants train on their own locally collected datasets and periodically share their model parameters.
This ML technique is immensely powerful for protecting the privacy of the users' data since raw data never leaves the devices.
Furthermore, two other key features of FL, partial device participation and multiple local iterations, help reduce the communication burden by communicating less information less often compared to centralized learning or other distributed training techniques. 
Accordingly, FL is most advantageous in instances where full data transmission is infeasible or otherwise restricted in some way.

In this paper, we consider the scenario where training is coordinated by a central aggregator that communicates with each device over a wireless network.
The aggregator is responsible for choosing devices, accumulating their models, and disseminating the aggregated global model back to the devices.
We use device and node interchangeably throughout the paper.
A block diagram of the wireless network running FL can be found in Fig. \ref{fig:blockDiagram}.
Each node $n$ is a device that has a unique dataset and an uplink channel to the aggregator with channel gain $h_n^t$ at time $t$.
The orchestration of FL over large-scale wireless networks such as this is a challenging task due to the dynamics of the channel.
In mobile edge computing (MEC) environments, for example, poor channel quality and intermittent connectivity can completely derail training. 
In the original FL algorithm, \emph{FedAvg}~\cite{mcmahan2017communication}, devices are selected uniformly at random in each round.
Although this strategy has been shown to converge \cite{li2019convergence,karimireddy2020scaffold,mitra2021linear}, in practice, it may lead to poor performance since it is completely agnostic to the communication medium as well as many other practical factors.
For example, if devices with poor channels are naively chosen, the model will converge much slower in terms of wall-clock time since it will take longer to communicate the parameters in each aggregation round.
On the other hand, if such devices are ignored, convergence will also be negatively affected since those devices' data will not be used.
The end result is the consumption of more network resources than is necessary due to longer training times. 
Thus, a more intelligent approach to device selection is needed to optimize network resource consumption and minimize wall-clock convergence time.

\begin{figure}[h]
    \centering
    \includegraphics[width=.7\linewidth]{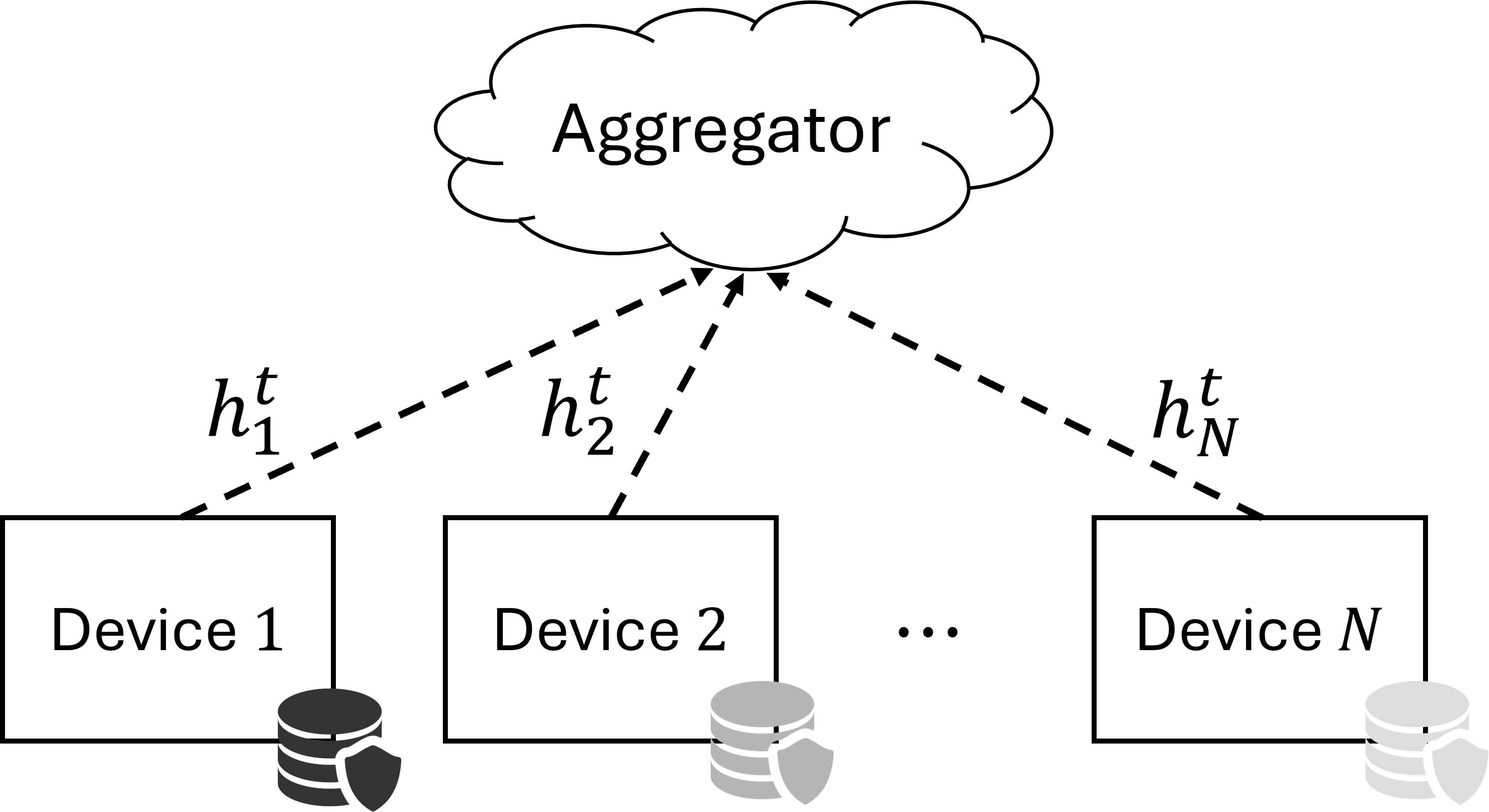}
    \caption{Block diagram of the uplink communication in federated learning over a wireless network.}
    \label{fig:blockDiagram}
\end{figure}

Before designing such an approach, though, the effect of arbitrary device selection probabilities on convergence must be understood to ensure convergence to a good model.
Therefore, in this paper, we first derive a convergence bound for non-convex loss functions with arbitrary device selection probabilities for each round and do not assume uniform bounded gradients as is sometimes done, e.g., \cite{perazzone2022communication, li2019convergence, li2018federated, yu2019parallel}.
In fact, we show that we can achieve linear speedup \cite{yang2021achieving} with only a single-sided learning rate.
We then use that bound to inform the development of a device selection and power allocation policy that greatly speeds up wall-clock convergence time.
More specifically, our upper bound shows that as long as all devices have a non-zero probability of participating in each round, then FL will converge in expectation to a stationary point of the loss function.
We then use the knowledge of how the selection probabilities affect the convergence bound to formulate a stochastic optimization problem that determines the optimal transmit powers and selection probabilities for a nonuniform sampling with replacement selection strategy.
Optimality in this case means it minimizes a weighted sum of the convergence bound and the time spent communicating model parameters while satisfying constraints on the peak and time average transmit power of each device.
The form of the convergence bound and our novel problem formulation allow us to utilize the Lyapunov drift-plus-penalty framework to solve the problem in an online and greedy fashion with optimality and constraint satisfaction guarantees. 
A key advantage of our new device selection algorithm is that it is able to make decisions according to current channel conditions without knowledge of the underlying channel statistics due to the structure of the Lyapunov framework.

To show the performance of our algorithm, we run numerous experiments on the CIFAR-10 dataset with varying levels of data and channel heterogeneity to demonstrate the saved training time using our developed algorithm.
We compare our results to the uniform selection policy of \emph{FedAvg} and show that the time required to reach a target accuracy can be sped up by up to $8.5$x.
In summary, our main contributions are as follows:
\begin{enumerate}
    \item We derive an upper bound for the convergence of non-convex loss functions using FL with arbitrary selection probabilities and no uniform bounded gradient assumption. We recover state-of-the-art convergence rates, including linear speedup with respect to the number of devices participating, with only a single-sided learning rate.
    \item We formulate a novel stochastic optimization problem that minimizes a weighted sum of the convergence bound and the amount of communication time spent on transmitting parameter updates, while satisfying transmit power constraints.
    \item Using the Lyapunov drift-plus-penalty framework, we derive an optimal online greedy solution that does not require knowledge of the channel statistics. We solve the problem by decoupling device selection from power allocation to obtain a combination of an analytical solution and a Riemannian manifold optimization problem that requires a solver due to a non-convex constraint.
    \item We provide experimental results that demonstrate a communication savings of up to $8.5\times$ compared to traditional uniform selection strategies and examine numerous scenarios involving varying levels of heterogeneity in both the data and the wireless channels.
\end{enumerate}

The rest of the paper is organized as follows.
First, we present some related work in Section \ref{sec:relatedWorks} before formally presenting our problem formulation in Section \ref{sec:probForm}.
Then, convergence analysis is provided in \ref{sec:convAnalysis} and the device scheduling policy is developed in Section \ref{sec:lyapunov}.
Finally, we present experimental results in Section~\ref{sec:experiments}.

\section{Related Works}\label{sec:relatedWorks}
Since its introduction via the \emph{FedAvg} algorithm in \cite{mcmahan2017communication}, FL has garnered a lot of attention in both industry and academia.
Drawn by the promise of privacy, much work has been done to formally guarantee privacy \cite{mothukuri2021survey,geyer2017differentially,truex2019hybrid}, characterize convergence \cite{li2019convergence,karimireddy2020scaffold,mitra2021linear,li2018federated,wang2019adaptive}, and minimize communication overhead \cite{konevcny2016federated}.
For the latter, some strategies and analyses utilize model compression via sparsification \cite{han2020adaptive,sattler2019robust} and quantization \cite{alistarh2017qsgd,albasyoni2020optimal,reisizadeh2020fedpaq} to decrease the communication load while others have focused on optimizing device participation \cite{nishio2019client,cho2020client,fraboni2021clustered}, for example. 
Naively applying these approaches, e.g., biased compression or device selection, however, can lead to poor performance due to model skew which causes the model to drift away from the globally optimal solution.
The presence of heterogeneity in the system is the main cause of the skew which inhibits successful implementation of communication-efficient techniques like device selection at scale if not adequately addressed.
Heterogeneity can come in the form of differences in network and computational resources as well as from non-identical data distributions across the devices.
Overly relying on high-performing nodes during training is one way in which model skew can be unintentionally induced, resulting in a final model that works well for those nodes, but not for the lesser nodes whose data is ignored.

In fact, one of the first works specifically targeting device selection \cite{nishio2019client} suffers from this problem.
In this study, they report poor performance, especially for \noniid~datasets, since they only aggregate models from devices that respond the quickest during each training round.
This underlines the importance of accounting for and guaranteeing convergence in device scheduling algorithms.
Some other empirical studies without convergence results include \cite{ribero2020communication,goetz2019active}.
When dealing with heterogeneity, the challenge is how to quickly and sufficiently learn from all nodes to converge to a good global model without straining resources. 
Therefore, in this paper, we first focus on ensuring convergence of the global loss function before designing our device selection and power allocation algorithm.

Most theoretical works on the convergence of non-convex loss functions in FL assume uniform device participation via sampling with and without replacement \cite{li2019convergence,karimireddy2020scaffold,li2018federated,yang2021achieving} which does not allow for more intelligent device selection.
Meanwhile, other papers analyze more sophisticated selection schemes, but often with strict assumptions.
Among these is \cite{yang2019scheduling}, which analyzes the performance of three different scheduling policies but only contains convergence results for simple linear regression tasks. 
In \cite{cho2020client}, the authors analyze the convergence of a general, possibly biased, device selection strategy, but only for strongly convex loss functions and uniformly bounded gradients.
Importantly, their bound introduces a non-vanishing term to the convergence bound due to selection bias and thus their strategy is not guaranteed to converge to a stationary point of the loss function.
Both \cite{ren2020scheduling} and \cite{ruan2021towards} also consider convergence, but again only for strongly convex loss functions.
Uniquely, though, the work in \cite{ruan2021towards} considers cases where devices may compute incomplete updates, become inactive/unavailable, depart early, or arrive late, but does not provide specific selection strategies.
Finally, \cite{gu2021fast} considers arbitrary participation probabilities for each device, but in their case study, these probabilities are held constant throughout training and are not a design parameter.
In fact, the probabilities used in the experiments match the fraction of data samples each node has which follows the traditional \emph{FedAvg} approach.
Additionally, in \cite{gu2021fast}, all devices must participate in the first round for convergence.
A more recent theoretical treatment of arbitrary participation can be found in \cite{wang2022unified}.
We improve upon these results by considering non-convex loss functions and derive a bound with an easily managed non-vanishing term under the condition that all devices have an arbitrary non-zero probability of participating in each round.

The results most similar to ours can be found in \cite{fraboni2021clustered}.
In the paper, a ``clustered'' sampling approach is proposed where a multinomial distribution is used to sample devices.
This framework allows for the device selection probabilities to change with each draw in a given round. 
In order to remain unbiased, however, the probabilities must be chosen such that in each round, each node is selected on average proportional to their data ratio.
In our approach, by modifying the weights during the global aggregation step, we can maintain unbiased updates while allowing arbitrary participation probabilities at every round without restricting the values.
The increased flexibility enables us to choose probabilities depending on the state of the network to increase communication efficiency while still guaranteeing convergence.

As for FL over wireless networks, some works \cite{chen2020joint,yang2020energy} develop frameworks that jointly optimize convergence and communication.
Similarly to our approach, they derive a convergence bound and then minimize it by finding the optimal parameter values.
For example, in \cite{chen2020joint}, the FL loss is minimized while meeting the delay and energy consumption requirements via power allocation, user selection, and resource block allocation.
Both papers, however, make the unrealistic assumption that the channel remains constant throughout the training process which we do not assume here.
In \cite{wadu2020federated}, stochastic optimization is used to determine an optimal scheduling and resource block policy that simultaneously minimizes the FL loss function and CSI uncertainties.
The loss function considered, though, is simple linear regression and does not readily apply to neural network models.
Stochastic optimization is also considered for FL in \cite{huang2020efficiency} and \cite{zhou2020cefl}, but not to design an optimal device selection policy that guarantees convergence of non-convex loss function.
In summary, unlike previous work, our approach guarantees convergence of non-convex loss functions with arbitrary participation probabilities and uses the results to develop a device selection and transmit power allocation policy.
By leveraging Lyapunov optimization, we provide a partially analytical solution that increases communication efficiency of FL. 

\section{Problem Formulation}\label{sec:probForm}
We now explain the FL optimization problem in more detail.
Consider a system with $N$ clients, where each client $n\in\{1,2,\dots,N\}$ has a possibly non-convex local objective $f_n(\x)$ with parameter $\x \in \mathbb{R}^d$.
We would like to solve the following finite-sum problem:
\begin{equation}\label{eqn:objFunction}
    \min_\x f(\x) := \frac{1}{N}\sum_{n=1}^N f_n(\x) .
\end{equation}
Note that for simplicity, we assume that each node has the same amount of data and thus have uniform weights, i.e., $\frac{1}{N}$, in \eqref{eqn:objFunction}.
When this is not the case, it can be easily accounted for by selecting appropriate weights for each local loss function in the sum and carrying it through the analysis.

To solve \eqref{eqn:objFunction}, we follow the typical \emph{FedAvg} \cite{mcmahan2017communication} FL paradigm, but modify it to allow for arbitrary device participation in each round.
In our algorithm, which can be found in Algorithm \ref{alg:fedavg}, participation is dictated by $q_n^t \in (0,1]$ which is the probability that device $n$ participates in round $t$.
To maintain unbiased updates, we weigh each device's gradient update inversely proportional to the participation probability of that device, as seen in Line 7 of Algorithm \ref{alg:fedavg}.
Intuitively, this ensures that devices that participate less frequently are weighed more heavily such that they have sufficient influence over the global model when they do participate and vice versa.
While knowledge of $q_n^t$ is required for aggregation, we treat it as a design parameter rather than a property of the environment and thus can conceptually develop an algorithm that chooses its value at each round.
We leave cases where $q_n^t$ is dictated by the environment to future work where it can be estimated.  
With this in place, we adjust its value for each device and aggregation round to optimize other aspects of the process such as system resource usage. 
Analyzing the convergence behavior of \emph{FedAvg} with arbitrary probabilities enables the ability to measure the impact of stochastic scheduling on training in order to better design device selection policies.

\begin{algorithm}[h]
 \caption{FedAvg with client sampling}
 \label{alg:fedavg}
\KwIn{$\gamma$, $\x_0$, $K$, $T$, $\{q_n^t\}$}
\KwOut{$\{\x_t\}$}

\For{$t \leftarrow 0, \ldots, T-1$}{
    Sample $\Identity_n^t \sim q_n^t, \forall n$;
    
    \For{$n \leftarrow 1,\ldots,N$ in parallel}{
        $\y^n_{t,0} \leftarrow \x_t$;

        \For{$i \leftarrow 0, \ldots, K-1$}{
            $\y^n_{t,i+1} \leftarrow \y^n_{t,i} - \gamma \g_n(\y^n_{t,i})$;
        }
        
    }
    
    $\x_{t+1} \leftarrow \x_t + \frac{1}{N}\sum_{n=1}^N \frac{\Identity_n^t}{q_n^t} \left(\y^n_{t,K}-\y^n_{t,0}\right) $; \, \tcp{global parameter update} \label{alg:weightAvg}

}
\end{algorithm}

We now explain Algorithm~\ref{alg:fedavg} in further detail.
First, $\Identity_n^t \in \{0,1\}$ denotes the random binary variable that represents whether client $n$ is selected in round $t$, where $q^t_n := \Pr\{\Identity_n^t = 1 \}$.
Next, after $\Identity_n^t$ is realized and the current global model is broadcasted, each device performs $K$ rounds of stochastic gradient descent (SGD) on its local dataset where $\g_n(\x)$ denotes the stochastic gradient of $f_n(\x)$ for device $n$ and $\gamma >0$ is the local learning rate.
This repeats for $T$ total aggregation rounds.
We use the variable $y^n_{t,i}$ to represent the intermediate local model updates between global aggregation rounds where $i$ indexes the local SGD iteration.
In the last step, only the selected devices' gradient updates are aggregated.
Note that even though this algorithm shows all $N$ devices performing SGD each round, it is logically equivalent to one in which only the selected clients via $\Identity_n^t$ receive the global model, compute gradient updates, and transmit back to the aggregator.

\section{Convergence Analysis}\label{sec:convAnalysis}
In this section, we provide an upper bound on the convergence of \eqref{eqn:objFunction} using Algorithm~\ref{alg:fedavg} for non-convex loss functions.
We assume that $\Identity_n^t$ and $\Identity_{n'}^t$ are independent through time, and that drawing from $\Identity_n^t$ is independent of SGD noise, i.e., $\Identity_n^t$ and $\g_n$ are independent.
We also make the following assumptions on the local loss functions.

\begin{assumption}[$L$-smoothness]\label{asmp:Lsmooth}
    \begin{align}
        \Vert \nabla f_n(\y_1) - \nabla f_n(\y_2) \Vert &\leq L \Vert \y_1 - \y_2 \Vert
    \end{align}
    for any $\y_1$, $\y_2$ and some $L>0$.
\end{assumption}
\begin{assumption}[Unbiased stochastic gradients]\label{asmp:unbiased}
    \begin{align}
        \Expectcond{\g_n(\y)}{\y}&=\nabla f_n(\y),
    \end{align}
    for any $\y$.
\end{assumption}

\begin{assumption}[Bounded stochastic gradient noise]\label{asmp:boundedGradNoise}
    \begin{align}
        \Expectbracket{\normsq{\g_n(\y) - \nabla f_n(\y)}} \leq \nu^2, \forall \y,n
    \end{align}
    for some $\nu > 0$.
\end{assumption}

\begin{assumption}[Bounded gradient divergence]\label{asmp:boundedGradDiv}
    \begin{align}
        \normsq{\nabla f_n(\y) - \nabla f(\y)} \leq \epsilon^2, \forall \y,n
    \end{align}
    for some $\epsilon > 0$.
\end{assumption}

While the first three assumptions are standard in non-convex optimization, the fourth is unique to FL and other distributed SGD techniques, as used in  \cite{li2018federated,yang2021achieving,lian2017can,reddi2020adaptive}. 
It bounds the differences between the local loss functions across clients due to having non-i.i.d.~datasets.
The i.i.d.~case is recovered when $\epsilon=0$.
Now, we state our convergence theorem in Theorem~\ref{thm:1}.
\begin{theorem}\label{thm:1}
    Let Assumptions \ref{asmp:Lsmooth}--\ref{asmp:boundedGradDiv} hold with $\gamma$, $T$, $K$, $N$, and $q_n^t$ defined as above.
Then, if $\gamma \leq \frac{\qmin}{8LK}$, where we assume the existence of a minimum participation probability $\qmin$ such that such that $ \qmin \leq q_n^t$ for all $n,t$, Algorithm~\ref{alg:fedavg} satisfies
\begin{align}
    \frac{1}{T}\sum_{t=0}^{T-1}\Expectbracket{\normsq{\nabla f(\x_t)}} \leq & \frac{2\left(\Expectbracket{f(\x_0)} - \Expectbracket{f(\x_{T})}\right)}{c\gamma KT} \nonumber\\
    &\quad + \Phi_1 +  \frac{\Phi_2}{TN} \sum_{t=0}^{T-1} \frac{1}{N} \sum_{n=1}^N \frac{1}{q_n^t} \, , \label{eqn:convBound}
\end{align}
where $\Phi_1 = \frac{1}{c}5 \gamma^2 K L^2\left(\nu^2+6K\epsilon^2 \right)$, $\Phi_2 = \frac{2L\gamma\nu^2}{c}$, and $c$ is a constant.
\end{theorem}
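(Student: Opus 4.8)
The plan is to follow the standard descent-lemma route for non-convex SGD, adapted to handle both the stochastic client sampling and the importance weighting $\Identity_n^t/q_n^t$. Writing the per-round increment as $\x_{t+1}-\x_t = -\frac{\gamma}{N}\sum_{n=1}^N \frac{\Identity_n^t}{q_n^t}\sum_{i=0}^{K-1}\g_n(\y_{t,i}^n)$, I would first invoke $L$-smoothness (Assumption~\ref{asmp:Lsmooth}) to obtain the one-step inequality
\begin{align}
\Expectt{f(\x_{t+1})} \leq f(\x_t) + \innerprod{\nabla f(\x_t), \Expectt{\x_{t+1}-\x_t}} + \frac{L}{2}\Expectt{\normsq{\x_{t+1}-\x_t}}, \nonumber
\end{align}
where $\Expectt{\cdot}$ denotes expectation conditioned on $\x_t$, and then control the two right-hand terms separately.

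For the inner-product term I would use the two facts that drive the whole analysis: $\Expectt{\Identity_n^t/q_n^t}=1$ (unbiased sampling) together with $\Expectcond{\g_n(\y)}{\y}=\nabla f_n(\y)$ from Assumption~\ref{asmp:unbiased}, which give $\Expectt{\x_{t+1}-\x_t}=-\frac{\gamma}{N}\sum_n\sum_i \Expectt{\nabla f_n(\y_{t,i}^n)}$. Applying the identity $\innerprod{a,b}=\frac{1}{2}\left(\normsq{a}+\normsq{b}-\normsq{a-b}\right)$ then peels off a clean descent term proportional to $-\gamma K\normsq{\nabla f(\x_t)}$, leaving a client-drift term in $\normsq{\nabla f_n(\y_{t,i}^n)-\nabla f_n(\x_t)}$ that $L$-smoothness converts into $L^2\normsq{\y_{t,i}^n-\x_t}$.

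The crux is the quadratic term. Because the sampling variables $\Identity_n^t$ are independent across $n$ and independent of the SGD noise, and the local trajectories of distinct clients are independent given $\x_t$, the cross-client covariances collapse and $\Expectt{\normsq{\sum_n Z_n}}=\sum_n \mathrm{Var}_t(Z_n)+\normsq{\sum_n \Expectt{Z_n}}$ with $Z_n:=\frac{\Identity_n^t}{q_n^t}\sum_i\g_n(\y_{t,i}^n)$. The key identity is $(\Identity_n^t)^2=\Identity_n^t$, so $\Expectt{(\Identity_n^t/q_n^t)^2}=1/q_n^t$; this is exactly where the $1/q_n^t$ factors are generated. Splitting each $\sum_i\g_n$ into its mean $\sum_i\nabla f_n(\y_{t,i}^n)$ plus a martingale-difference noise part whose cross terms vanish in conditional expectation, Assumption~\ref{asmp:boundedGradNoise} bounds the noise contribution by $K\nu^2/q_n^t$, which after scaling produces the $\Phi_2\cdot\frac{1}{q_n^t}$ term, while the residual squared-gradient piece is handled through smoothness and the gradient-divergence bound of Assumption~\ref{asmp:boundedGradDiv}.

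Two supporting pieces then close the argument. First, a client-drift lemma bounds $\frac{1}{K}\sum_i\Expectt{\normsq{\y_{t,i}^n-\x_t}}$ by unrolling the local recursion and solving the resulting recursive inequality; with Assumptions~\ref{asmp:boundedGradNoise}--\ref{asmp:boundedGradDiv} this yields terms of order $\gamma^2 K(\nu^2+K\epsilon^2)$ plus a multiple of $\gamma^2 K^2\normsq{\nabla f(\x_t)}$, feeding into $\Phi_1=\frac{5}{c}\gamma^2 K L^2(\nu^2+6K\epsilon^2)$. Second — and this is where the stepsize restriction enters — the surviving $\normsq{\nabla f(\x_t)}$ coefficients from both the drift term and the gradient portion of the quadratic term (the latter carrying a stray $1/q_n^t\le 1/\qmin$) must be dominated by the descent term; imposing $\gamma\le\frac{\qmin}{8LK}$ shrinks these coefficients enough that a positive multiple $\frac{c\gamma K}{2}$ of $\normsq{\nabla f(\x_t)}$ survives. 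Summing the one-step inequality over $t=0,\dots,T-1$, telescoping $\sum_t\Expectt{f(\x_{t+1})-f(\x_t)}$ into $\Expectbracket{f(\x_T)}-\Expectbracket{f(\x_0)}$, and dividing by $\frac{c\gamma K T}{2}$ then yields \eqref{eqn:convBound}. The main obstacle I expect is precisely the bookkeeping in the quadratic term: arguing that all cross terms (across clients, and across local iterations via the martingale structure) vanish so only the diagonal $1/q_n^t$-weighted terms remain, and tracking the gradient portion carefully enough that the $1/\qmin$ factor lands in the stepsize condition rather than polluting the bound. Together with the recursive drift lemma, this is where all the constants — the factors $5$, $6$, $8$, and $c$ — are pinned down.
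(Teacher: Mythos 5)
Your skeleton (descent lemma, unbiasedness of $\Identity_n^t/q_n^t$, the polarization identity, the drift lemma, telescoping) matches the paper's proof, and your treatment of the noise part of the quadratic term — using $(\Identity_n^t)^2=\Identity_n^t$ and the martingale-difference structure to produce $K\nu^2/q_n^t$ — is exactly how the paper generates $\Phi_2$. The gap is in the \emph{gradient} part of the quadratic term, and it is fatal to recovering the stated bound. The polarization identity produces three terms, not the two you list: besides the descent term $-\frac{\gamma K}{2}\normsq{\nabla f(\x_t)}$ and the drift term, it leaves the negative quantity $-\frac{\gamma}{2N^2K}\Expectt{\normsq{\sum_{n}\sum_{i}\nabla f_n(\y^n_{t,i})}}$, which you never mention. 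In the paper this term is the whole mechanism: the gradient portion of the quadratic term is kept as a single global quantity and bounded by $\frac{1}{\qmin}\Expectt{\normsq{\sum_n\sum_i\nabla f_n(\y^n_{t,i})}}$ — via the Cauchy--Schwarz bound $\Expectt{\Identity_i^t\Identity_j^t}\le\sqrt{q_i^tq_j^t}$, not via cross-client independence — and then cancelled \emph{exactly} against that negative term under $\gamma\le\frac{\qmin}{2LK}$. That cancellation is what confines $1/\qmin$ to the step-size condition and, crucially, keeps $\epsilon^2$ out of the participation-weighted term.

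Your variance decomposition forfeits this cancellation. Writing $\Expectt{\normsq{\sum_n Z_n}}=\sum_n\mathrm{Var}_t(Z_n)+\normsq{\sum_n\Expectt{Z_n}}$ leaves inside each $\mathrm{Var}_t(Z_n)$ the per-client quantity $\left(\frac{1}{q_n^t}-1\right)\Expectt{\normsq{\sum_i\nabla f_n(\y^n_{t,i})}}$, which is non-negative and cannot be cancelled by the global negative term: the latter can vanish when different clients' gradients cancel one another even though each $\nabla f_n\neq 0$. The only way to control a single client's $\Expectt{\normsq{\sum_i\nabla f_n(\y^n_{t,i})}}$ is Assumption~\ref{asmp:boundedGradDiv}, which converts it into drift terms plus $K^2\normsq{\nabla f(\x_t)}+K^2\epsilon^2$. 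The $\normsq{\nabla f(\x_t)}$ piece can indeed be dominated by the descent term as you propose, but the $\epsilon^2$ piece is a constant and cannot be; it survives multiplied by $1/q_n^t$. Your route therefore yields a participation-amplified term proportional to $\gamma L\left(\nu^2+O(K\epsilon^2)\right)\cdot\frac{1}{TN}\sum_{t}\frac{1}{N}\sum_n\frac{1}{q_n^t}$, strictly weaker than the theorem's $\Phi_2=\frac{2L\gamma\nu^2}{c}$, and it destroys the paper's highlighted conclusion that data heterogeneity does not amplify the effect of partial participation. A secondary problem: your decomposition assumes the $\Identity_n^t$ are independent across clients, an assumption the paper's Cauchy--Schwarz step deliberately avoids; under the multinomial sampling-with-replacement policy to which the theorem is later applied, the $\Identity_n^t$ are dependent across $n$, so your argument would not even cover the paper's own use of the result.
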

\begin{proof}
    The full proof can be found in Appendix \ref{sec:appendix}.
    The proof utilizes the assumption that the client sampling random variable $\Identity_n^t$ is independent of the stochastic gradient noise in $\g_n$ and that $\Expectbracket{\Identity_n^t} = q_n^t$
    Thus, the weighted averaging of $\frac{1}{q_n^t}$ in Step \ref{alg:weightAvg} of Algorithm \ref{alg:fedavg} ensures that the gradient estimate remains unbiased.
    That is, $\Expectbracket{\frac{\Identity_n^t}{q_n^t} \g_n (\x)} = \nabla f_n(\x)$.

    Another key assumption is the existence of a constant $ 0<\qmin \leq q_n^t $ for all $n,t$.
    This is reasonable since, otherwise, $q_n^t \rightarrow 0$ for at least one $n$, which would cause the weight in the averaging step to go to infinity, trivially causing divergence.
    Additionally, $\qmin$ can be viewed as a design parameter, e.g., to control fairness.
\end{proof}

Our convergence bound consists of three terms.
The first vanishes as $T$ increases and, while the second and third are non-vanishing, they can be managed with appropriate selection of $\gamma$ (see Corollary \ref{cor:gen}).
The non-vanishing terms are separated into ones that are not amplified by partial participation, $\Phi_1$, and ones that are, $\Phi_2$.
The convergence bound increases when participation probabilities are small.
This follows with the intuition that sampling fewer participants in a round leads to a noisier estimate of the full gradient and thus more iterations will be required to converge to a desired loss.

Interestingly, the level of data heterogeneity, as measured by $\epsilon^2$, does not amplify the effect of partial participation.
This suggests that partial participation affects convergence solely through increasing the variance of the gradient estimate and not by a biasing of the model towards the minima of the more frequent participants.
This decoupling is due to the inverse-proportional weighting of participation probabilities in the aggregation step of Algorithm \ref{alg:fedavg} which ensures unbiased gradient estimates.
It essentially compensates for infrequent participants by taking larger steps towards their local minima in the low-probability event that they are called upon.
Similarly, the upper bound on step size requires that when the minimum participation probability, $\qmin$, is small, the step size must also be small.
This makes sense intuitively since larger steps would also heavily skew the model towards the more frequent participants' local stationary points.

In the bound, the partial participation term can be trivially minimized by setting $q_n^t=1$ for all $n$ and $t$, i.e., full participation, but it is impractical to assume that every device can or will participate in every round.
This necessitates a selection strategy that minimizes the time average $\frac{1}{T} \sum_{t=0}^{T-1} \frac{1}{N} \sum_{n=1}^N \frac{1}{q_n^t}$ under some constraints.
Before introducing our approach to this problem, we present two corollaries to further examine convergence behavior and show that linear speedup is achieved.

\begin{corollary}\label{cor:gen}
    If we choose $\gamma = \min\{\frac{\qmin}{8LK},\frac{\sqrt{N\qmin}}{\sqrt{TK}L}\}$, then we have
    \begin{align}
        &\frac{1}{T}\sum_{t=0}^{T-1}\Expectbracket{\normsq{\nabla f(\x_t)}} \nonumber\\
        & \quad\leq \mathcal{O}\left(\frac{L}{\qmin T} + \frac{L}{\sqrt{TNK\qmin}} + \frac{N\qmin \epsilon^2}{TK} + \frac{\sqrt{\qmin} \nu^2 Q}{\sqrt{TNK}}\right),
    \end{align}
    where $Q=\frac{1}{T}\sum_{t=0}^{T-1}\frac{1}{N} \sum_{n=1}^N \frac{1}{q_n^t} \leq \frac{1}{\qmin} $.
\end{corollary}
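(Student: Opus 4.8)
The plan is to substitute the prescribed step size directly into the bound of Theorem~\ref{thm:1} and simplify the three resulting terms into the claimed big-$\mathcal{O}$ expression. Writing $\gamma = \min\{a,b\}$ with $a = \frac{\qmin}{8LK}$ and $b = \frac{\sqrt{N\qmin}}{\sqrt{TK}L}$, the first observation is that the step-size constraint $\gamma \leq \frac{\qmin}{8LK}$ required by Theorem~\ref{thm:1} holds automatically, so the bound applies. I would then exploit two elementary facts about a minimum: $\frac{1}{\gamma} = \max\{\frac{1}{a},\frac{1}{b}\} \leq \frac{1}{a}+\frac{1}{b}$, and $\gamma \leq a$ as well as $\gamma \leq b$ individually. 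The idea is to apply $\frac{1}{\gamma} \leq \frac{1}{a}+\frac{1}{b}$ to the first (descent) term, and to apply $\gamma \leq b$ to the terms carrying $\Phi_1$ (which scale like $\gamma^2$) and $\Phi_2$ (which scales like $\gamma$), since the $b$ branch is what produces the $T^{-1/2}$ rates responsible for linear speedup.

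Concretely, I would first bound $\Expectbracket{f(\x_0)} - \Expectbracket{f(\x_{T})} \leq f(\x_0) - f^\star$, with $f^\star = \inf_\x f(\x)$, a problem-dependent constant absorbed by $\mathcal{O}(\cdot)$. Substituting $\frac{1}{\gamma} \leq \frac{8LK}{\qmin} + \frac{\sqrt{TK}L}{\sqrt{N\qmin}}$ into the descent term $\frac{2(\cdots)}{c\gamma KT}$ splits it into two pieces scaling as $\frac{L}{\qmin T}$ and $\frac{L}{\sqrt{TNK\qmin}}$, the first two terms of the corollary. For $\Phi_1 = \frac{5\gamma^2 KL^2(\nu^2+6K\epsilon^2)}{c}$ I would use $\gamma^2 \leq b^2 = \frac{N\qmin}{TKL^2}$, which cancels the $L^2$ and turns the $\gamma^2 K$ prefactor into a multiple of $\frac{N\qmin}{T}$, producing the data-heterogeneity term proportional to $\epsilon^2$ together with a companion $\nu^2$ term. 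For the $\Phi_2$ contribution, recognizing that $\frac{1}{TN}\sum_{t}\frac{1}{N}\sum_{n}\frac{1}{q_n^t} = \frac{Q}{N}$ and using $\gamma \leq b$ yields $\frac{\sqrt{\qmin}\,\nu^2 Q}{\sqrt{TNK}}$, the final term; linear speedup is visible here since, at full or uniform participation where $Q$ and $\qmin$ are $\mathcal{O}(1)$, this term scales as $N^{-1/2}$.

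The last bookkeeping step is to discard the terms that are asymptotically dominated for large $T$ — in particular the $\nu^2$ contribution coming from $\Phi_1$ decays like $T^{-1}$ and is subsumed by the $T^{-1/2}$ term coming from $\Phi_2$ — and to record the side bound $Q \leq \frac{1}{\qmin}$, which is immediate from $q_n^t \geq \qmin \Rightarrow \frac{1}{q_n^t} \leq \frac{1}{\qmin}$ followed by averaging over $n$ and $t$. I expect the only real obstacle to be organizational rather than technical: deciding, term by term, which branch of the $\min$ to invoke so that the $\qmin$, $N$, $K$, and $T$ exponents line up with the stated rates, and confirming which residual terms may be folded into the big-$\mathcal{O}$. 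No inequalities beyond Theorem~\ref{thm:1} and the elementary properties of a minimum should be required.
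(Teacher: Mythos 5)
Your proposal is correct and takes essentially the same route as the paper, whose entire proof is the one-liner you expanded: substitute the step size into \eqref{eqn:convBound} and use $\max\{x,y\}\leq x+y$ (i.e., $1/\gamma \leq 1/a + 1/b$), with $\gamma\leq b$ handling the $\Phi_1$ and $\Phi_2$ terms and $Q\leq 1/\qmin$ immediate from $q_n^t\geq\qmin$. One caveat that affects your derivation and the paper's equally: plugging $\gamma^2 \leq N\qmin/(TKL^2)$ into $\Phi_1$ produces an $\epsilon^2$ term of order $N\qmin K\epsilon^2/T$ rather than the stated $N\qmin\epsilon^2/(TK)$, so the $K$-exponent in the corollary's third term appears to be a typo in the statement rather than something either argument actually yields.
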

\begin{proof}
    The order terms can be obtained by plugging in the step size in \eqref{eqn:convBound} and noting that $\max\{x,y\}\leq x+y$.
\end{proof}

    \begin{corollary}\label{cor:full}
        In the full participation scenario, i.e.,~$q_n^t=1 \, \forall n,t$, the convergence rate of Algorithm \ref{alg:fedavg} is $\frac{1}{T}\sum_{t=0}^{T-1}\Expectbracket{\normsq{\nabla f(\x_t)}} \leq \mathcal{O}\left(\frac{1}{\sqrt{TNK}}\right)$. 
   
         Furthermore, for uniform sampling \emph{with} and \emph{without} replacement, the convergence rate of Algorithm \ref{alg:fedavg} is $\frac{1}{T}\sum_{t=0}^{T-1}\Expectbracket{\normsq{\nabla f(\x_t)}} \leq\mathcal{O}\left(\frac{1}{\sqrt{TmK}}\right)$. 
    \end{corollary}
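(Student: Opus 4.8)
The plan is to obtain both claims as direct specializations of Corollary~\ref{cor:gen}. The key observation is that the order bound there depends on the chosen participation probabilities only through the two quantities $\qmin$ and $Q = \tfrac{1}{T}\sum_{t=0}^{T-1}\tfrac{1}{N}\sum_{n=1}^N \tfrac{1}{q_n^t}$. So for each scenario the recipe is identical: (i) determine the participation probabilities $q_n^t$ that the scheme induces, (ii) evaluate $\qmin$ and $Q$, (iii) substitute into the four-term bound of Corollary~\ref{cor:gen}, and (iv) keep the dominant term as $T\to\infty$.

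For full participation I would set $q_n^t=1$ for all $n,t$, so that $\qmin=1$ and $Q=1$. Substituting gives a bound whose four terms are of order $\tfrac{L}{T}$, $\tfrac{L}{\sqrt{TNK}}$, $\tfrac{N\epsilon^2}{TK}$, and $\tfrac{\nu^2}{\sqrt{TNK}}$. The first and third decay like $1/T$ and are therefore asymptotically dominated by the two $1/\sqrt{T}$ terms, leaving $\mathcal{O}\!\left(\tfrac{1}{\sqrt{TNK}}\right)$ and exhibiting the claimed linear speedup in $N$.

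For uniform sampling I would note that both the with- and without-replacement variants induce a common per-device marginal probability $q$ that is identical across devices and rounds, so that $\qmin=q$ and $Q=\tfrac{1}{q}$. Setting $m:=Nq$ for the (expected) number of participants, substitution into Corollary~\ref{cor:gen} produces terms of order $\tfrac{L}{qT}$, $\tfrac{L}{\sqrt{TNKq}}$, $\tfrac{Nq\epsilon^2}{TK}$, and $\tfrac{\nu^2}{\sqrt{qTNK}}$; dropping the lower-order $1/T$ contributions and using $TNKq=TKm$ yields $\mathcal{O}\!\left(\tfrac{1}{\sqrt{TNKq}}\right)=\mathcal{O}\!\left(\tfrac{1}{\sqrt{TKm}}\right)$.

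The only genuine subtlety, and the step I expect to require the most care, is step (i) for the sampling schemes: I must verify that each scheme is compatible with the binary-indicator model and the independence assumptions underlying Theorem~\ref{thm:1}, and that it indeed induces the uniform marginal $q=m/N$ that the above substitution presumes. For sampling without replacement of $m$ of $N$ devices this marginal is immediate, but the with-replacement case requires confirming that the relevant effective probability still reduces to $m/N$ so that the computed $\qmin$ and $Q$ match. Everything after that is routine substitution together with the asymptotic comparison of $1/T$ against $1/\sqrt{T}$ terms.
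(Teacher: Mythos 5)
Your overall strategy---specializing Corollary~\ref{cor:gen} by computing $\qmin$ and $Q$ for each scheme---is exactly the paper's approach, and your treatment of full participation ($\qmin=Q=1$) and of uniform sampling \emph{without} replacement ($\qmin=q_n^t=m/N$, $Q=N/m$) is correct and matches the paper. However, for sampling \emph{with} replacement there is a genuine gap, and it sits precisely at the step you deferred. The marginal probability under $m$ multinomial draws is $q_n^t=1-(1-1/N)^m$, and this does \emph{not} ``reduce to $m/N$'': it is strictly smaller than $m/N$ for $m\geq 2$, so your plan to confirm that equality cannot succeed, and your device of ``setting $m:=Nq$'' is circular---it proves a rate in terms of a redefined quantity $Nq$, which is smaller than the actual number of draws $m$, so the bound you obtain, $\mathcal{O}\bigl(1/\sqrt{TKNq}\bigr)$, is weaker than the claimed $\mathcal{O}\bigl(1/\sqrt{TKm}\bigr)$ until one relates $Nq$ back to $m$.

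What is actually needed, and what the paper's proof is almost entirely devoted to, is the constant-factor lower bound
\begin{equation}
1-\left(1-\frac{1}{N}\right)^m \;\geq\; \frac{\min\{m,N\}}{N}\,\bigl(1-e^{-1}\bigr),
\end{equation}
which gives $1/\qmin=\mathcal{O}(N/m)$ and hence the claimed rate via Corollary~\ref{cor:gen}. The paper establishes this by splitting into the cases $m\geq N$ (where $1-(1-1/N)^m\geq 1-e^{-1}$ is already a constant) and $m\leq N$ (where a monotonicity argument in $r=m/N$ shows $\bigl(1-(1-1/N)^m\bigr)N/m\geq 1-(1-1/m)^m\geq 1-e^{-1}$); an equivalent shortcut is the concavity inequality $1-e^{-x}\geq(1-e^{-1})x$ for $x\in[0,1]$ applied to $x=m/N$. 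Without some such argument, the with-replacement half of the corollary is unproven. One further point you flagged but should not worry about: the multinomial indicators $\Identity_n^t$ are negatively correlated across devices, but Theorem~\ref{thm:1} never requires cross-device independence (its proof handles $\Expectt{\Identity_i^t\Identity_j^t}$ via Cauchy--Schwarz), so compatibility is automatic once the marginals are identified.
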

    \begin{proof}
        For the full participation and uniform sampling without replacement scenarios, the corollary can be obtained by noting that $\qmin = q_n^t=1$ and $\qmin = q_n^t=m/N$ for all $n,t$, for each scenario, respectively, and then applying Corollary \ref{cor:gen} with $Q=1/\qmin$.
        
        For the uniform sampling with replacement scenario, we note that $ \qmin = q_n^t=1-(1-1/N)^m$. 
        To show the result, we first prove that $ 1/\qmin = \mathcal{O}(N/m)$ for sampling with replacement and then the result follows from the sampling without replacement strategy.
        
        First, consider the case where $m\geq N$,\footnote{In sampling \emph{with} replacement, since devices can be re-selected each draw, you can continue to sample beyond the number of devices and thus it is possible for $m\geq N$. For sampling \emph{without} replacement, $m=N$ would result in full participation and no more draws can be made.} then we have
        \[
            1-\left(1-\frac{1}{N}\right)^m \geq 1-e^{-m/N} \geq 1-e^{-1}.
        \]
        Next, consider the case where $m\leq N$ and let $r=m/N$ such that $r\leq 1$.
        Observe, then, that the derivative with respect to $r$ of $(1-(1-r/m)^m)/r$ is always negative and thus is minimized when $r=1$.
        Then, we have
        \begin{align*}
            \left(1-\left(1-\frac{1}{N}\right)^m\right)\frac{N}{m} &= (1-(1-r/m)^m)/r \\
            & \geq 1-(1-1/m)^m \\
            & \geq 1-e^{-1},
        \end{align*}
        which gives
        \[
            1-\left(1-\frac{1}{N}\right)^m \geq \frac{m}{N} (1-e^{-1}) .
        \]
        Combining the two cases, we have a final bound of
        \[
            1-\left(1-\frac{1}{N}\right)^m \geq \frac{\min\{m,N\}}{N} (1-e^{-1}),
        \]
        and thus,
        \[
            1/\qmin = \frac{1}{1-\left(1-\frac{1}{N}\right)^m} = \mathcal{O}\left( \frac{N}{m} \right).
        \]
        The result then follows simply via Corollary \ref{cor:gen}.
        Finally, we note that if $m>N$, then the $m$ in the order term is replaced with $N$.
        
    \end{proof}


    Corollary \ref{cor:full} shows that we recover the state-of-the-art convergence rate in \cite{li2019convergence,karimireddy2020scaffold} that indicates a linear speedup in terms of the number of clients for both full \emph{and} partial participation.
    This is achieved \emph{without the use of a two-sided learning rate}\footnote{Although, our weighted averaging step in \eqref{alg:weightAvg} may be viewed as a per-device, per-round learning rate that becomes a global learning rate in the uniform sampling scenario.} which prior methods require.
    Linear speedup is desired since the number of iterations required to reach a given loss can be reduced by proportionally increasing the number of clients in a given round.
    This behavior ensures that our algorithm fully leverages the parallelism of federated learning, even with \noniid~data.
    For general participation probabilities, the speedup is in terms of $N\qmin$.
    Interestingly, with the choice of step size in Corollary \ref{cor:gen}, if $\qmin \leq 1/N$, then speedup is lost.

Now equipped with an understanding of how device selection probability $q_n^t$ affects FL convergence, we will design an optimization problem that adaptively determines both $q_n^t$ and transmit power allocations in order to minimize communication overhead while guaranteeing convergence.
\section{Communication-Efficient Scheduling Policy}\label{sec:lyapunov}
In this section, we propose a novel device selection and transmit power allocation policy that minimizes communication overhead while guaranteeing convergence. 
The policy is based on solving a stochastic optimization problem that minimizes a time average function consisting of the convergence bound in \eqref{eqn:convBound} and the average time spent communicating model updates, while satisfying time average transmission power constraints for each device.
This optimization problem is formally stated in Section \ref{sec:prob_form}.
The output of the optimization problem is the selection probabilities $q_n^t$ used in Algorithm \ref{alg:fedavg} and the transmit powers $P_n^t$ used in each round.
The communication of multiple devices' model updates over many rounds may cause a huge bottleneck in FL, especially in heterogeneous environments, which is why we focus on minimizing the time spent during this phase.
Unlike previous work, we speed up training in terms of wall-clock time, minimizing the burden on the network, while still theoretically guaranteeing convergence.
The natural form of the bound allows us to formulate the problem in terms of time averages.
This lends itself perfectly to the application of the Lyapunov drift-plus-penalty framework which specifically deals with this class of problems.
A major benefit of the approach is that it does not require knowledge of the exact dynamics or statistics of the channel; only the instantaneous channel state information (CSI) is needed.

\subsection{Communication Model}
In our model, we consider a simple wireless star network where all devices have direct uplink and downlink channels to the global aggregator (see Figure \ref{fig:blockDiagram}).
Each device has its own channel $h_n^t$ to the aggregator, but must take turns in transmitting their parameters with transmit power $P_n^t$ via time-division multiple access (TDMA).
For simplicity, we only consider the uplink communication time as the downlink is a broadcast of common information by a highly capable base station which takes much less time.
By assuming that we can transmit at capacity, the communication time for device $n$ in a given round $t$ is
\begin{align}\label{eqn:comm_time}
     \tau_n^\text{comm}(t) = \frac{\ell \, \Identity_n^t}{B\log_2\left( 1\!+\!|h_n^t|^2 \frac{P_n^t}{N_0} \right)}\,,
\end{align}
where $\ell$ is the size of the model in bits, $B$ is the bandwidth, and $\log_2(\cdot)$ denotes the base 2 logarithm.
We assume that the aggregator has current CSI in the form of channel gain $|h_n^t|^2$ and noise power $N_0$ at each round $t$ in order to calculate this, but not the underlying distributions.

Next, as is typical in FL \cite{mcmahan2017communication,karimireddy2020scaffold,yang2021achieving}, we assume that the aggregator is using sampling \emph{with} replacement with $m$ draws.
With this policy, a minimum of one client and a maximum of $m$ clients are selected in each round.
More specifically, the aggregator draws from a multinomial distribution where each device has probability $\omega_n^t$ of being chosen in each of the $m$ draws such that $\sum_{n=1}^N \omega_n^t = 1$ for every round $t$.
Our goal is to determine $\omega_n^t$ such that communication time is minimized and convergence is guaranteed.
However, since our convergence bound is in terms of arbitrary marginal probabilities $q_n^t$, we must relate $\omega_n^t$ to $q_n^t$ in order to measure the policy's impact on convergence.
For $m$ draws, the relationship is
\begin{align}
    q_n^t &= 1-(1-\omega_n^t)^m,
\end{align}
where we now require
\begin{align}
    \sum_{n=1}^N \omega_n^t =\sum_{n=1}^N  1-(1-q_n^t)^{1/m} = 1
\end{align}
to be satisfied in order to obtain a valid multinomial distribution.
In practice, sampling \emph{without} replacement can be used, but expressing $q_n^t$ in terms of arbitrary $\omega_n^t$ and $m$ is not tractable and thus difficult to optimize directly.

\subsection{Problem Formulation}\label{sec:prob_form}
Similar to some prior works, such as \cite{perazzone2022communication,wang2019adaptive,wang2022federated,shi2020joint,cui2022optimal}, we minimize the convergence bound \eqref{eqn:convBound} as a proxy for the actual convergence loss since the direct effect that parameters have on loss is generally unknown.
Simultaneously, we also wish to minimize the time spent communicating the parameters, as in \eqref{eqn:comm_time}, to decrease wall-clock training time.
The former is minimized when all devices participate in every round while the latter is minimized when fewer devices participate.
Thus, to balance the competing goals, we formulate our objective function as the weighted sum between the two:
\begin{align}\label{eqn:objFunc}
    y_0(t,q_n^t,P_n^t) :=  \sum_{n=1}^{N} \left(\frac{1}{N q_n^t} + \lambda\!\cdot\! \tau_n^\text{comm}(t) \right), 
\end{align}
where $\lambda>0$ is a user-defined, tunable parameter that controls the trade-off.
The parameter also subsumes the coefficients in the last term in \eqref{eqn:convBound} such that $L$ and $\nu$, for example, can be ignored in the following optimization problem.

Finally, we wish to limit the power expenditure of communication over time to a given time-average threshold, $\bar{P}_n$, and instantaneous power $P_\text{max}$.
Thus, we formulate our optimization problem as
\begin{align}\label{eqn:minProb}
    \min_{\{q_n^t\},\{P_n^t\}} \quad & \lim_{T\rightarrow\infty} \frac{1}{T} \sum_{t=0}^{T-1} \Expectbracket{y_0(t)}\\
    \text{s.t.} \quad & \lim_{T\rightarrow\infty} \frac{1}{T} \sum_{t=0}^{T-1} P_n^t q_n^t \leq \bar{P}_n, \ \forall n=1,\ldots,N \nonumber\\
    & 0 \leq P_n^t \leq P_\text{max}, \ n=1,\ldots,N \nonumber\\
    & \sum_{n=1}^N  1-(1-q_n^t)^{1/m} = 1 & \nonumber\\
    & q_n^t \in (0,1], \nonumber
\end{align}
where
\begin{align}\label{eqn:objFunc}
    \Expectbracket{y_0(t)} &= \Expectbracket{y_0(t,q_n^t,P_n^t)} \nonumber \\
    & = \sum_{n=1}^{N} \left(\frac{1}{N q_n^t} + \lambda\!\cdot\! \frac{\ell \, q_n^t}{B\log_2\left( 1\!+\!|h_n^t|^2 \frac{P_n^t}{N_0} \right)}\right).
\end{align}

The first two constraints limit the time average and peak power, respectively, where $ P_n^t q_n^t = \Expectbracket{P_n^t \Identity_n^t} $ is the expected power usage in round $t$ by device $n$.
The third constraint ensures that a proper multinomial distribution is found for sampling with replacement.
In its current state, \eqref{eqn:minProb} is difficult to solve as it is not possible to know the behavior of the channel $h_n^t$ for all $t$ ahead of time.

\subsection{Lyapunov Formulation}
The novelty of our convergence bound and formulation comes from the fact that both the effect of $q_n^t$ on convergence and the communication time are in the form of a time average.
This allows us to leverage the Lyapunov stochastic optimization framework \cite{neely2010stochastic} to reformulate \eqref{eqn:minProb} into a form that we can solve greedily at each round with optimality guarantees.
By converting our transmission power constraints into a set of \emph{virtual} queues, we can apply the Lyapunov theory to analyze our problem and derive an online solution.
The practical implications of stabilizing virtual queues will be explored at the end of this section and its effect will be further illustrated in the experiments of Section \ref{sec:experiments}.

To put our optimization problem into the Lyapunov drift-plus-penalty framework and using standard notation, we turn the first time-average constraint in \eqref{eqn:minProb} into a virtual queue $Z_n^t$ for each client $n$ such that
\begin{align}\label{eqn:virtualQUpdate}
    Z_n^{t+1} = \max [Z_n^t+y_n(t),0] \,,
\end{align}
where \vspace{-0.1in}
\begin{align}
    y_n(t) = P_n^t q_n^t - \bar{P}_n \,.
\end{align}
Since we have no actual queues, the Lyapunov function is 
\begin{align}
    \mathcal{L}(\mathbf{\Theta}^t) := \frac{1}{2} \sum_{n=1}^N (Z_n^t)^2 \,,
\end{align}
where $\mathbf{\Theta}^t$ represents the current queue states, which in this case, is just $\{Z_n^t: \forall n \}$.
Next, we define the Lyapunov drift:
\begin{align}
    \Delta^{t+1} = \mathcal{L}^{t+1}- \mathcal{L}^{t},
\end{align}
where we drop $\mathbf{\Theta}^{t}$ for simplicity.
Finally, we have the Lyapunov drift-plus-penalty function that we aim to minimize:
\begin{align}\label{eq:driftpluspenalty}
    \Delta^{t} + V \Expectbracket{y_0(t)|\mathbf{\Theta}^{t}},
\end{align}
where $V>0$ is another arbitrarily chosen weight that controls the fundamental trade-off between queue stability and optimality of the objective functions (the effect of which is also explored in Section \ref{sec:experiments}).

Now, by utilizing Lemma 4.6 from \cite{neely2010stochastic} and assuming that the random event, i.e., channel gain $|h_n^t|^2$, is \iid\footnote{We make this assumption out of simplicity. Additional analyses can show that the algorithm converges even when this doesn't hold, including for non-ergodic processes \cite{neely2010stochastic}.}~with respect to $t$, we can upper bound \eqref{eq:driftpluspenalty}:
\begin{align}\label{eqn:driftPlusPenaltyBound}
    \Delta^{t} + V \Expectbracket{y_0(t)|\mathbf{\Theta}^{t}} &\leq C +  V \Expectbracket{y_0(t)|\mathbf{\Theta}^{t}}\nonumber\\
        & \quad+ \sum_{n=1}^N Z_n^t\Expectbracket{y_n(t)|\mathbf{\Theta}^{t}}
\end{align}
where $C>0$ is a constant. 
Next, according to the Min Drift-Plus-Penalty Algorithm, we opportunistically minimize the expectation in the right hand side of \eqref{eqn:driftPlusPenaltyBound} at each time step $t$:
\begin{align}
    \min_{\{\omega_n^t\},\{P_n^t\}}\,\,\,  & f(q_n^t,P_n^t) := V y_0(t) + \sum_{n=1}^N Z_n^ty_n(t) \label{eqn:minimization}\\
    \text{s.t.}\quad  & 0 \leq P_n^t \leq P_\text{max}, \quad \forall n=1,\ldots,N \nonumber\\
    & \sum_{n=1}^N \omega_n^t  = 1 \nonumber\\
    & q_n^t = 1-(1-\omega_n^t)^{m} \nonumber \\
    & \omega_n^t \in (0,1] \, . \nonumber
\end{align}
We now have the problem in a form that is decoupled in time allowing for a greedy optimization approach.







        
        
        
        

    

\subsection{Solving the Problem \eqref{eqn:minimization}}
First, we utilize the fact that \eqref{eqn:minimization} is convex with respect to $P_n^{t}$ and that the optimal value for $P_n^{t}$ is independent of $q_n^t$ to produce the following analytical expression in Theorem \ref{thm:Lyapunov}.
\begin{theorem}\label{thm:Lyapunov}
    The $P_n^{t}$ that optimizes \eqref{eqn:minimization} is independent of $q_n^t$ and is given by either the endpoint, i.e., $P_n^{t,\textnormal{opt}}=P_\text{max}$, or by
    \begin{align} \label{eqn:powerOpt}
        P_n^{t,\textnormal{opt}} = \frac{N_0}{|h_n^t|^2} \left(\frac{A}{4} W_0\left(\sqrt{\frac{A}{4}}\right)^{-2}-1\right)
    \end{align}
    where $A=\frac{V\lambda\ell |h_n^t|^2 \left(\log(2)\right)^2}{N_0 B Z_n^t}$ and $W_0(\cdot)$ is the principal branch of the Lambert $W$ function.
\end{theorem}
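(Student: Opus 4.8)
The plan is to exploit the separable structure of the objective $f(q_n^t,P_n^t)$ in \eqref{eqn:minimization} with respect to the power variables. Collecting, for a fixed device $n$, all terms that depend on $P_n^t$, the only contributions are the communication-time penalty $V\lambda \frac{\ell q_n^t}{B\log_2(1+|h_n^t|^2 P_n^t/N_0)}$ coming from $Vy_0(t)$ and the virtual-queue term $Z_n^t P_n^t q_n^t$ coming from $\sum_n Z_n^t y_n(t)$. Both carry a common positive factor $q_n^t$, so minimizing over $P_n^t$ is equivalent to minimizing
\[
g(P_n^t) := \frac{V\lambda \ell}{B\log_2(1+|h_n^t|^2 P_n^t/N_0)} + Z_n^t P_n^t ,
\]
which does not involve $q_n^t$. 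This immediately establishes the first claim, that $P_n^{t,\textnormal{opt}}$ is independent of $q_n^t$, and reduces the problem to $N$ decoupled single-variable minimizations of $g$ over $[0,\Pmax]$.

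Next I would establish convexity of $g$ and analyze its boundary behavior. Writing $a=|h_n^t|^2/N_0$ and $u=1+aP_n^t$, the stationarity condition involves the factor $1/[u(\ln u)^2]$, and since the derivative of $u(\ln u)^2$ equals $\ln u\,(\ln u+2)>0$ for $u>1$, a direct computation gives $g''(P_n^t)>0$ on $(0,\Pmax]$, so $g$ is strictly convex there. Because $g(P_n^t)\to+\infty$ as $P_n^t\to 0^+$ and $g'(P_n^t)\to Z_n^t\ge 0$ as $P_n^t\to\infty$, the unconstrained minimizer is the unique stationary point; projecting onto $[0,\Pmax]$ then shows the optimum is either this interior stationary point, when it lies within the feasible interval, or the endpoint $\Pmax$ otherwise (the latter also covering the degenerate case $Z_n^t=0$, for which $g$ is monotonically decreasing). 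This justifies the ``endpoint or \eqref{eqn:powerOpt}'' dichotomy.

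It then remains to solve the stationarity condition $g'(P_n^t)=0$ in closed form, which is the crux of the proof. Setting the derivative to zero and clearing denominators yields the transcendental equation $(1+aP_n^t)\bigl(\ln(1+aP_n^t)\bigr)^2=A$, where $A$ absorbs the constants $V,\lambda,\ell,B,N_0,Z_n^t,|h_n^t|^2$ together with the $\log_2$-to-$\ln$ conversion factor. The key manipulation is the substitution $v=\ln u=\ln(1+aP_n^t)$, turning this into $v^2 e^{v}=A$; since the relevant root has $u>1$ and hence $v>0$, I may take the positive square root to obtain $\tfrac{v}{2}e^{v/2}=\sqrt{A/4}$, which is exactly the defining relation of the principal Lambert branch, so $\tfrac{v}{2}=W_0(\sqrt{A/4})$. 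Applying the identity $e^{W_0(z)}=z/W_0(z)$ to rewrite $u=e^{v}=e^{2W_0(\sqrt{A/4})}=\tfrac{A}{4}W_0(\sqrt{A/4})^{-2}$ and solving $u=1+aP_n^t$ for $P_n^t$ reproduces \eqref{eqn:powerOpt}. The main obstacle is this final step: recognizing the square-rooted equation as a Lambert-$W$ form, verifying that the principal branch $W_0$ is the one selecting the admissible root $v>0$, and using the exponential identity to recover an explicit (rather than implicit) expression for the optimal power.
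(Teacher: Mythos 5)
Your proof is correct and follows essentially the same route as the paper's: you exploit the separable structure so that the common positive factor $q_n^t$ cancels from the $P_n^t$-dependent terms (giving the independence claim), then reduce the stationarity condition to $(1+|h_n^t|^2 P_n^t/N_0)\left(\ln(1+|h_n^t|^2 P_n^t/N_0)\right)^2=A$ and solve it with the principal Lambert-$W$ branch via the identity $e^{W_0(z)}=z/W_0(z)$, exactly as in the paper's appendix. Your explicit convexity and boundary analysis is a slightly more careful justification of the ``interior root or endpoint $P_\text{max}$'' dichotomy, which the paper handles more tersely by checking gradient roots against the feasible interval.
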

\begin{proof}
    The proof can be found in Appendix \ref{sec:appendix2}.
\end{proof}

Optimizing over $q_n^t$ is trickier because of the additional constraint.
We first note that the problem \eqref{eqn:minimization} can be simplified to the following form
\begin{align}
    \min_{\{\omega_n^t\}}\,\,\,  & \sum_{n=1}^{N} \left(A_n^t\left(q_n^t\right)^{-1} + B_n^t q_n^t + C_n^t\right) \label{eqn:minSimple}\\
    \text{s.t.}\quad  & \sum_{n=1}^N \omega_n^t  = 1 \nonumber\\
    & q_n^t = 1-(1-\omega_n^t)^{m} \nonumber \\
    & \omega_n^t \in (0,1] \, , \nonumber
\end{align}
where $A_n^t = \frac{V}{N}$, $ B_n^t = \frac{V\lambda\ell}{B\log_2\left( 1+|h_n^t|^2 \frac{P_n^{t,\textnormal{opt}}}{N_0}\right)}+Z_n^tP_n^{t,\textnormal{opt}}$, $C_n^t = -Z_n^t \bar{P}_n$, and $P_n^{t,\textnormal{opt}}$ is the optimal $P_n^t$ determined by \eqref{eqn:powerOpt}.
While the objective function is convex, we have introduced a non-convex equality constraint which makes this problem significantly more difficult to solve.
Additionally, the constraint creates a coupling of the individual device selection probabilities and cannot be solved independently.

A natural approach to solving this optimization problem is to use the Lagrangian multiplier method.
Unfortunately, however, the gradient of the Lagrangian of \eqref{eqn:minSimple} does not have a simple closed form solution, so we must resort to numerical solvers.
The feasible region of this problem is the probability simplex which is a Riemannian manifold.
To solve this problem, we use Manopt \cite{manopt} which is a toolbox for optimization on manifolds.

\subsection{Optimality Guarantees}
Theorem 4.8 in \cite{neely2010stochastic}, Theorem \ref{thm:Lyapunov}, and assuming we obtain the optimal value for $q_n^t$ from the solver guarantee that this algorithm satisfies
\begin{align}\label{eqn:optimGap}
    \limsup_{T\rightarrow\infty} \frac{1}{T} \sum_{t=0}^{T-1} \Expectbracket{y_0(t)} \leq y_o^{\textnormal{opt}} + \frac{C+D}{V},
\end{align}
where $y_o^{\textnormal{opt}}$ is the minimum of $y_o$, $D\geq0$ is a constant whose existence is guaranteed in Theorem 4.8, and $C\geq0$ is a constant that bounds the expected difference between the optimal value of \eqref{eqn:minimization} and the value given by an approximately optimal decision.
Since the subproblem in \eqref{eqn:minSimple} is non-convex, we may choose a local optimum, so $C\neq0$ but is finite.
The theorem also guarantees that the time-average transmit power constraint is satisfied as $t\rightarrow\infty$.
The user-defined parameter $V$ that traditionally controls the trade-off between the average queue backlog and the gap from optimality now controls the speed of convergence in addition to the optimality gap in~\eqref{eqn:optimGap}.

In \eqref{eqn:powerOpt}, we can see that when there is a large virtual queue $Z_n^t$, the transmission power is decreased in order to satisfy the constraint.
In this way, the virtual queue represents how far from the time average constraint we are.
As $V$ is increased, the effect that the current virtual queue has on selection becomes less important and it takes longer to satisfy the average power constraint.
This is also explored experimentally in Section \ref{sec:effectV}.
A large $\lambda$ favors the minimization of communication time rather than the convergence bound which naturally leads to more unbalanced $q_n^t$, whereas small $\lambda$ will results in more uniform $q_n^t$.

\section{Experiments}\label{sec:experiments}
In our experiments, we trained a convolutional neural network (CNN) on the CIFAR-10 benchmark \cite{krizhevsky2009learning} with data spread across $N=100$ clients.
We examine the convergence behavior over wall-clock time for numerous scenarios with varying levels of data and channel heterogeneity, number of clients draws $m$, and values for hyperparameter $\lambda$.
We then compare our device scheduling algorithm with the uniform selection baseline to demonstrate the speedup in convergence ours provides.
Since the cost of an iteration/aggregation round is variable, analyzing convergence in terms of wall-clock time instead of iterations gives more insights into how the algorithms will perform in practice.
The standard assumption that it is better to converge in fewer iterations does not necessarily hold if the cost of each iteration is large.
The wall-clock time consists of both communication time and computation time where communication time is calculated as in \eqref{eqn:objFunc} and computation time is chosen to be a constant.
It is assumed that computation is performed in parallel and each device completes all $K$ local iterations at the same deterministic time as opposed to communication which occurs sequentially via TDMA.

\subsection{Setup}
The CIFAR-10 dataset consists of 60,000 color images of 10 classes where 10,000 images are reserved for testing.
We assign each device 500 samples from the dataset.
In order to simulate a \noniid~data distribution among devices, we use the approach presented in \cite{hsu2019measuring} where the distribution of the 10 classes among each client is determined by a categorical distribution with a Dirichlet prior.
This allows for the level of heterogeneity to be quantified by a single parameter $\alpha$.
More specifically, for each experiment and client, we draw the probabilities for the categorical distribution via $\mathbf{c}\sim \text{Dir}(\alpha \mathbf{e}) $ where $\mathbf{e}$ is a uniform probability vector.
We can then vary $\alpha$ to produce different levels of heterogeneity.
For example, at the extremes, we have $\alpha \rightarrow 0$ which results in very \noniid~data where $\mathbf{c}$ becomes a one-hot vector such that each client is assigned samples from only one class.
Meanwhile, on the other hand, when we have $\alpha \rightarrow \infty$, we encounter the \iid~case where each client gets an even distribution of all classes.
Then, after obtaining $\mathbf{c}$ for each client, we draw from the resulting categorical distribution $500$ times to determine the class of each sample.
When the class is determined, one sample from that class is uniformly drawn from the dataset.
We note that this does allow for the possibility of certain samples to be assigned more than once or not at all, but this approach provides the most randomness and flexibility.

For training, we use the same CNN model as in \cite{perazzone2022communication} which contains $d=555,178$ parameters.
Assuming $32$-bit floating point numbers, we accordingly set $\ell=32d$.
We also set the minibatch size to $32$, $\gamma = 0.01$, $I=10$, and $B=22\times 10^{6}$ to simulate WiFi bandwidth.
The power constraints are set to $\bar{P}_n=1$ and $\Pmax=35$ dB and noise power is normalized to $N_0=1$.
We set $V=100$ in our selection algorithm and justify this choice later in Section \ref{sec:effectV}.
For the channel model, we assume each device experiences Rayleigh fading such that $|h_n^t|$ is distributed as a Rayleigh random variable.
In the first set of experiments, we examine the heterogeneous channel case such that each device experiences a different level of fading.
In particular, we assign the Rayleigh parameters in a linear increasing fashion from $\sigma=0.1$ to $\sigma=10$ for the $100$ devices.
For the next set of experiments, we examine the homogeneous channel case such that each device has $\sigma=1$.
Note that our algorithm only requires knowledge of the instantaneous $|h_n^t|$ and not the underlying distribution, requiring only that it is \iid~across time slots.

To fairly compare to the uniform selection baseline, we set $q_n^t$ to be the appropriate values for the aggregation weights in Algorithm \ref{alg:fedavg} and we set $P_n^t$ to maximally satisfy the transmit power constraint in \eqref{eqn:minProb}.
So, for uniform sampling \emph{with} replacement, we set $q_n^t=\frac{1}{1-(1-1/N)^{m}}$ and $P_n^t=\min\{\Pmax, \bar{P}_n \cdot q_n^t \}$.
To avoid big outliers that likely would not be chosen by either selection policy in practice, we lower bound the possible values for $|h_n^t|^2 \geq .001$.
This is very generous to the uniform case as it is agnostic to channel conditions and would greatly suffer from choosing a device in deep fade.
Finally, since the total elapsed time varies across runs at each communication round, we perform linear interpolation for each run to generate points at uniform time intervals.
This allows for us to more easily average the performance of the $3$ runs for each scenario.
We then perform a rolling average to smooth the curves to better see the convergence trends.

\subsection{Exploring the impact of computation time}
First, we give an illustrative example of how the optimal number of sampling with replacement draws $m$ in each round is impacted by increasing computation time.
In Fig. \ref{fig:time_to_target_acc}, we plot the time to reach a target accuracy of $0.775$ versus computation time per round for a basic scenario with $m= 1$, $5$, and $10$ draws using our algorithm.
In this scenario, there are three clear regions in which a different number of draws reaches the target accuracy quickest.
It can be seen that as computation time increases, the optimal number of device draws also increases. 
In the first region, \emph{scheduling only one device per round is optimal} despite the fact that it takes more iterations to converge than the other cases (depicted later in Fig. \ref{fig:Het_CIFAR10_IID_iter}). 
However, since the communication phase in each iteration takes less time, due to the aggregator only needing to wait for $1$ device in a sequential communication process, the lower quality gradient estimates are overcome by the quick communication.
On the other hand, when computation time is comparatively larger than communication, drawing more devices per round is optimal. 
This is because as computation time increases, it begins to dominate the total time, so communicating additional devices' updates in each round has a smaller comparative impact on total time.
Thus, \emph{it is more advantageous to schedule more devices in each round} in order to leverage the parallelism of computation and obtain more informative updates.

We expect that as computation time continues to increase, the optimal number of draws will continue to increase up to a certain point.
It is hard to predict this point as it is related to optimal minibatch sizes. 
This is a similarly unsolved problem where the generalizability of smaller minibatches generally outweighs the benefits of more accurate gradient estimates obtained from larger batch sizes \cite{keskar2016large}.
Additionally, at a certain point Amdahl's Law must be taken into consideration for very large computation times.
Even though our algorithm still technically provides an advantage over uniform, if communication only accounts for a small percentage of the total time, then optimizing communication alone cannot speed up performance significantly.

\begin{figure}
    \centering
    \includegraphics[width=0.6\linewidth]{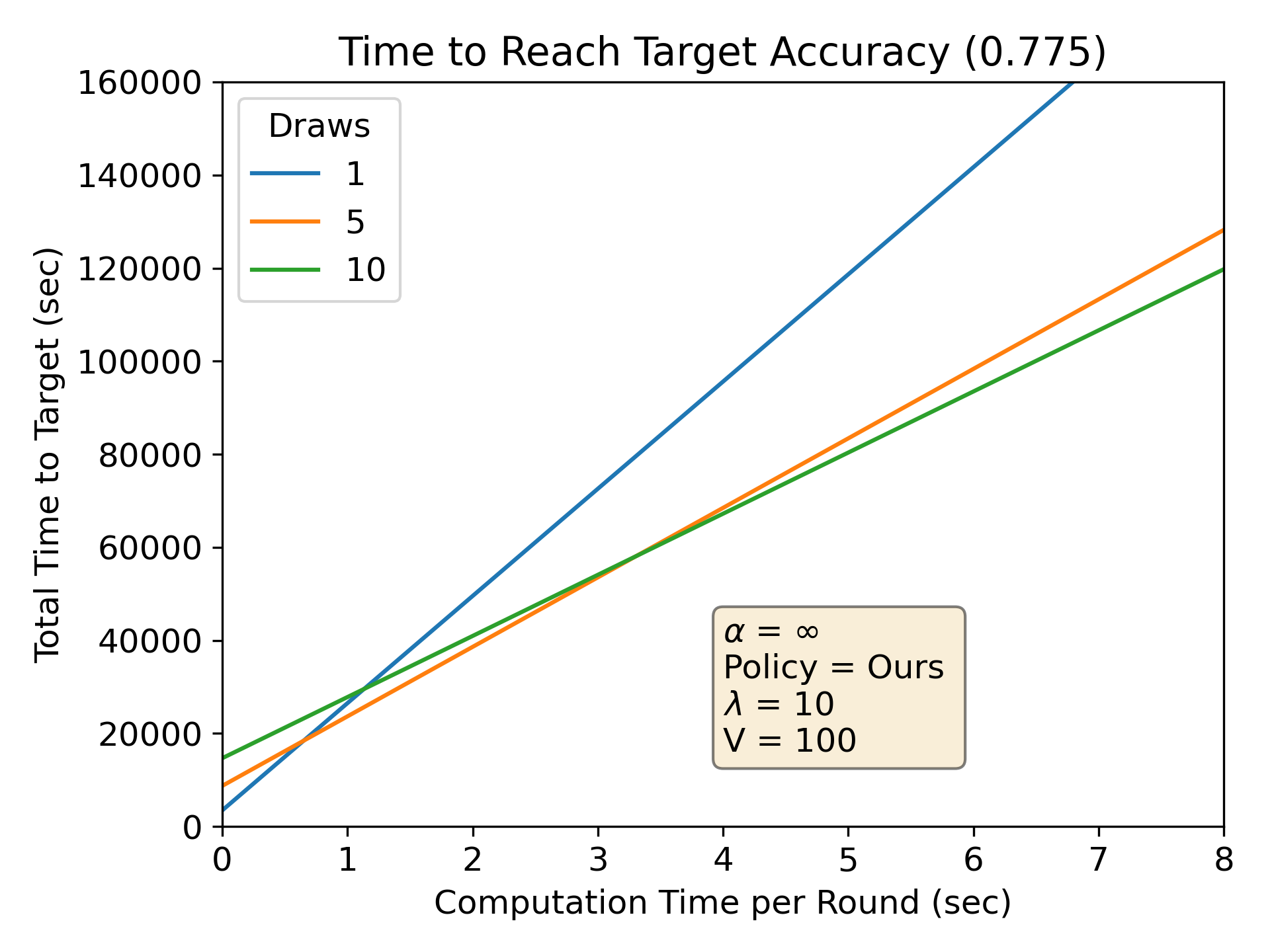}
    \caption{The optimal number of devices chosen each round depends on computation time.}
    \label{fig:time_to_target_acc}
\end{figure}

\subsection{Best Hyperparameters}
Next, in Table \ref{tab:opt_config}, we identify the best hyperparameters for both our algorithm and the uniform baseline for $3$ different levels of data heterogeneity over computation time ranges.
We see similar trends for all three levels of data heterogeneity and both selection policies where, again, \emph{higher computation time results in a greater number of draws}.
It is also interesting to note the optimal $\lambda$ values.
For large $\lambda$, minimizing communication time is favored over minimizing the convergence bound.
Here, we see that as computation time increase comparative to communication time, lower $\lambda$ values become optimal.
This is because as the percentage of total time due to computation time increases, communication time becomes less important.
Thus, it is better to choose devices in a more uniform manner in order to avoid client drift toward devices with better channels.
We will use these optimal hyperparameter ranges in the following results, but we note that we only tested a discrete subset of hyperparameters so they are not necessarily truly optimal.

\begin{table}[h]
    \centering
    \caption{Best hyperparameters from experiments for different computation time ranges and levels of data heterogeneity.}
    \begin{tabular}{ |c||c|c|c| }
         \hline
         \multicolumn{4}{|c|}{Heterogeneous Channel Gain} \\
         \hline
         \multicolumn{1}{|c||}{}&\multicolumn{3}{c|}{Ours}\\
         \hline
         Case& Comp Times (s) & Draws ($m$) & $\lambda$ \\
         \hline
         $\alpha=\infty$, & $0 - 0.83$   &  $1$  & $100$ \\
         target $=0.775$  &   $0.83 - 2.3$  & $10$   & $100$  \\
         & $2.3 - 7.5$ & $5$&  $10$  \\
         & $7.5 - 16.3$ & $10$ & $10$    \\
         &  $16.3 - \infty$  & $10$ & $1$   \\
         \hline
         $\alpha=1$, & $0 - 0.61$  & $1$  & $100$ \\
         target $=0.73$ & $0.61 - 3.1$ & $10$ & $100$\\
         & $3.1 - 15.6$ & $5$ & $1$\\
         & $15.6 - \infty$ & $10$ & $1$   \\
         \hline
         $\alpha=0$, & $0 - 0.26$ & $1$ & $100$\\
         target $=0.65$ & $0.26 - 1.8$ & $10$ & $100$\\
         & $1.8 - \infty$ & $10$ & $10$ \\
         \hline
         \hline
         \multicolumn{1}{|c||}{}&\multicolumn{3}{c|}{Uniform}\\
         \hline
         Case& Comp Times (s) & Draws ($m$) &  \\
         \hline
         $\alpha=\infty$, & $0 - 2$ & $1$ & \\
         target $=0.775$ & $2 - \infty$  &  $5$ & \\
         \hline
         $\alpha=1$,  & $0 - 0.84$  &  $1$ & \\
         target $=0.73$ & $0.84 - 46$  &  $5$ & \\
         & $46 - \infty$  & $10$ & \\
         \hline
         $\alpha=0$, & $0 - 0.74$ & $1$  & \\
         target $=0.65$ & $0.74 - 6.$1 & $5$  & \\
          & $6.1 - \infty$ & $10$ & \\
         \hline
    \end{tabular}
    \label{tab:opt_config}
\end{table}

\subsection{Heterogeneous Channels}
In this section, we investigate the convergence behavior of our algorithm over wall-clock time and compare it to the uniform baseline.
We begin with the heterogeneous channels case where some devices have very good channels on average while others do not.
In Fig. \ref{fig:CIFAR10}, we show how top-$1$ accuracy and training loss are affected by different sampling with replacement draws $m$.
In Figs. \ref{fig:Het_CIFAR10_IID} and \ref{fig:Het_CIFAR10_IID_loss}, we show the results for instantaneous computation such that the total time only consists of communication time while in Figs. \ref{fig:Het_CIFAR10_IID2} and \ref{fig:Het_CIFAR10_IID_loss2}, the computation time is set to $2$ seconds.
The plots clearly show that \emph{our algorithm outperforms uniform}, especially when the number of draws is large.
In fact, for the $m=10$ regime, we see a very large speedup of $8.5\times$ to reach a target accuracy of $0.775$ where our policy takes $5,339$ seconds on average and uniform takes $45,748$ seconds.
The speedup is lesser for the $m=1$ case which is about $1.3\times$ for the same target.
In these first two plots, where only communication time is taken in consideration, scheduling \emph{just one device per round} is quickest.
As explained in the previous section, this is a scenario where it is \emph{better to quickly aggregate updates from fewer devices} since the shorter communication time per round outweighs the higher variance of each update.

In the second set of plots, we show the same scenario but with parallel computation time of two seconds per round.
This causes the optimal number of draws to change to $m=5$ with a speedup of $1.31\times$ over uniform.
The speedup in the $m=10$ case remains large, however.
Since optimizing communication time only affects the fraction of the total time due to communication, the speedup is less significant when computation times are large.

\begin{figure}
    \centering
    \begin{subfigure}[b]{0.49\linewidth}
        \centering
        \includegraphics[width=1\linewidth]{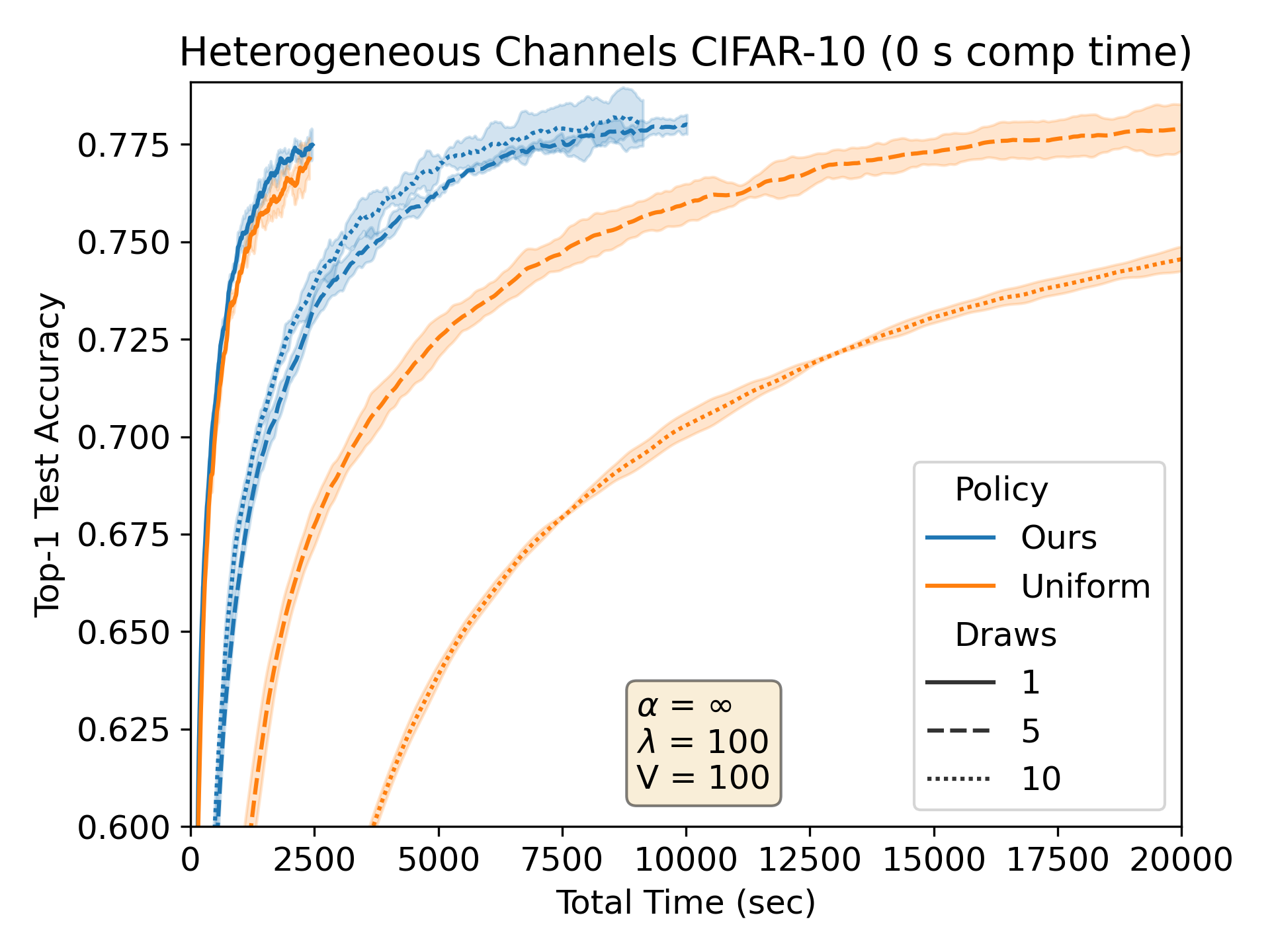}
        \caption{Testing accuracy over time (instantaneous computation).}
        \label{fig:Het_CIFAR10_IID}
    \end{subfigure}
    \hfill
    \begin{subfigure}[b]{0.49\linewidth}
        \centering
        \includegraphics[width=1\linewidth]{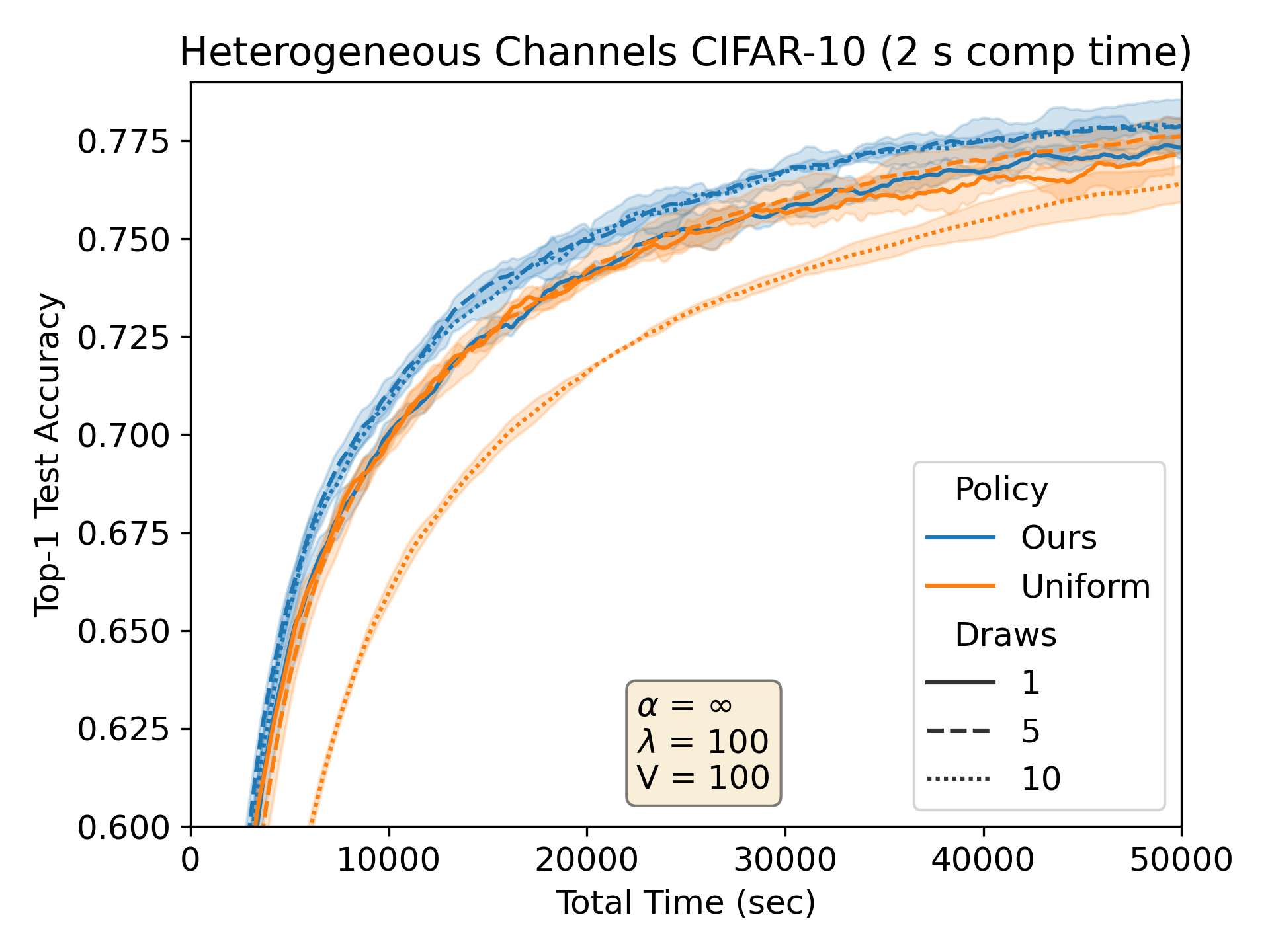}
        \caption{Testing accuracy over time (2 sec parallel computation).}
        \label{fig:Het_CIFAR10_IID2}
    \end{subfigure}

    
    \begin{subfigure}[b]{0.49\linewidth}
        \centering
        \includegraphics[width=1\linewidth]{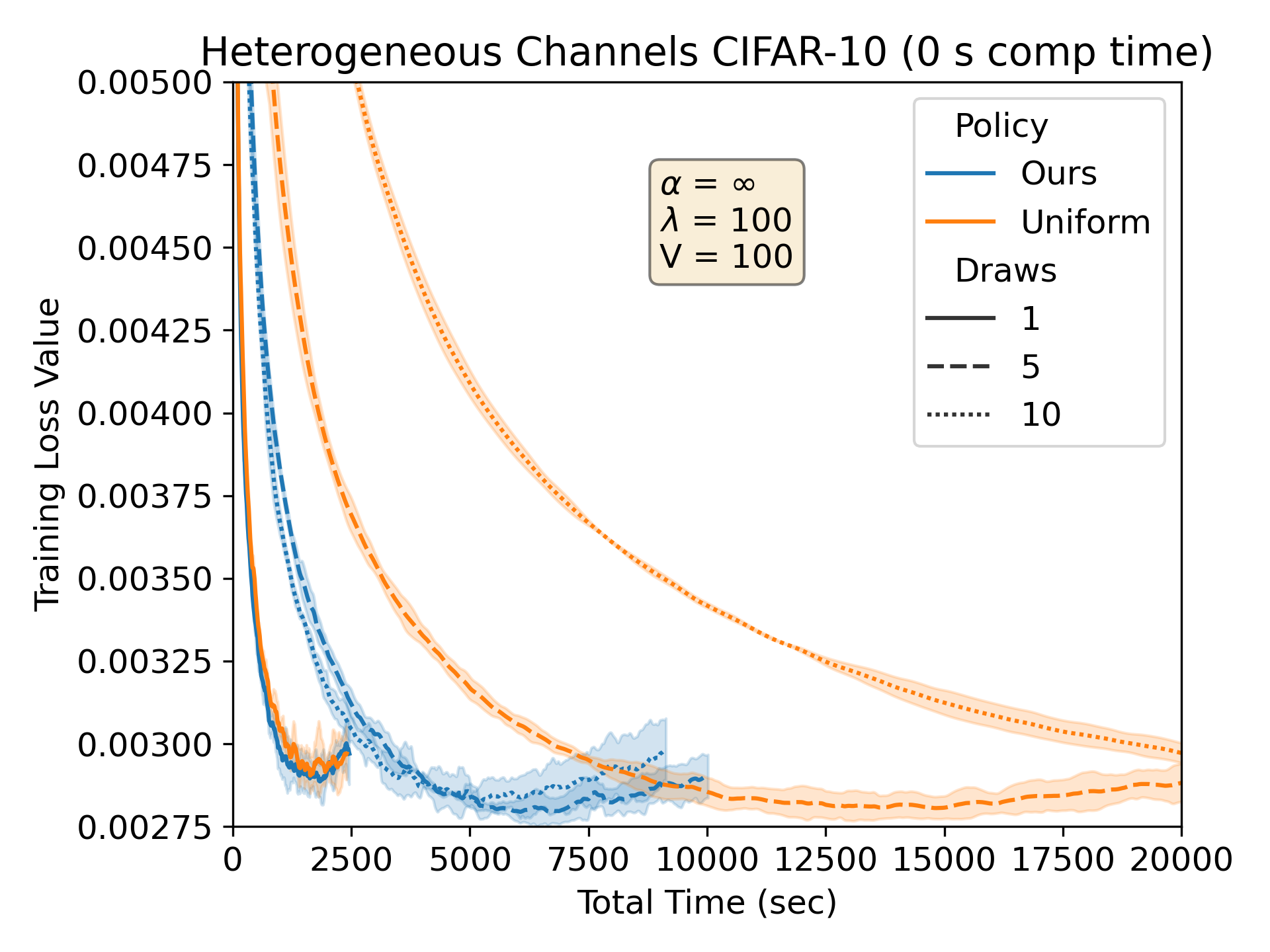}
        \caption{Training loss over time (instantaneous computation).}
        \label{fig:Het_CIFAR10_IID_loss}
    \end{subfigure}
    \hfill
    \begin{subfigure}[b]{0.49\linewidth}
        \centering
        \includegraphics[width=1\linewidth]{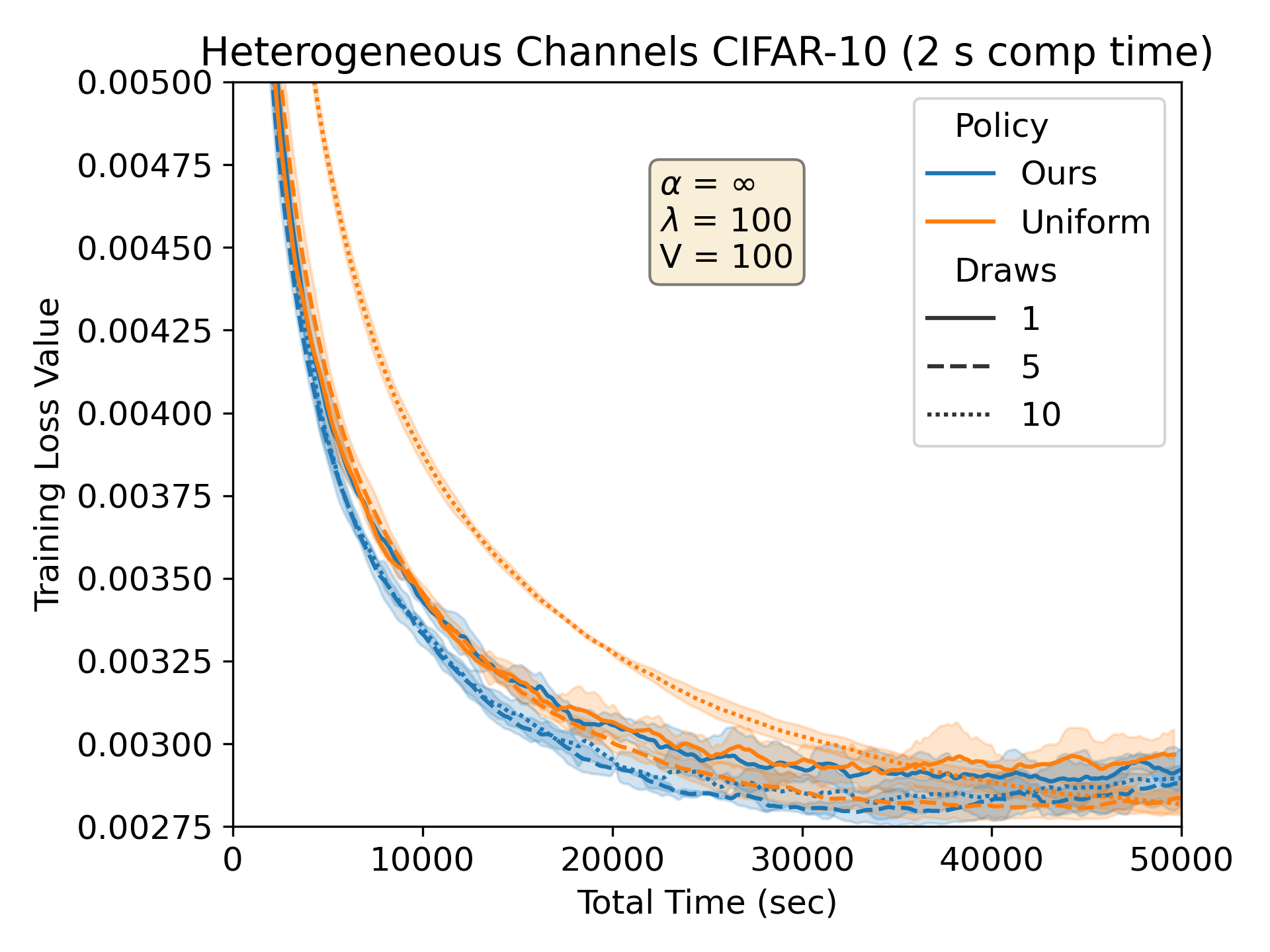}
        \caption{Training loss over time (2 sec parallel computation).}
        \label{fig:Het_CIFAR10_IID_loss2}
    \end{subfigure}
    
    \caption{Comparison of total communication time for uniform selection vs proposed algorithm on CIFAR-10 dataset.}
    \label{fig:CIFAR10}
\end{figure}

Next, in Fig. \ref{fig:Het_CIFAR10_nonIID}, we examine the effect of heterogeneity in the data through parameter $\alpha$.
As $\alpha$ increases, the data distribution across devices becomes more homogeneous such that each device has access to samples of more classes.
For this experiment, we set the computation time to one second, $m=10$, and $\lambda=100$.
We chose a one second computation time since the optimal hyperparameters for all three $\alpha$ levels is the same as seen in Table \ref{tab:opt_config}.
The results again show that our algorithm outperforms the uniform baseline, but that \emph{the advantage is lesser for higher degrees of heterogeneity} in the data.
This intuitively makes sense since a more uniform sampling approach will help avoid client shift in the model as each node will contribute equally on average.
Still, though, our algorithm provides an advantage by avoiding nodes with poor instantaneous channels.
The general trend of slower convergence for higher \noniid-ness also holds.
The convergence speed for $\alpha=0$ is especially bad since each device only has access to samples from one class.

\begin{figure}
    \centering
    \begin{subfigure}[b]{0.475\linewidth}
        \centering
        \includegraphics[width=1\linewidth]{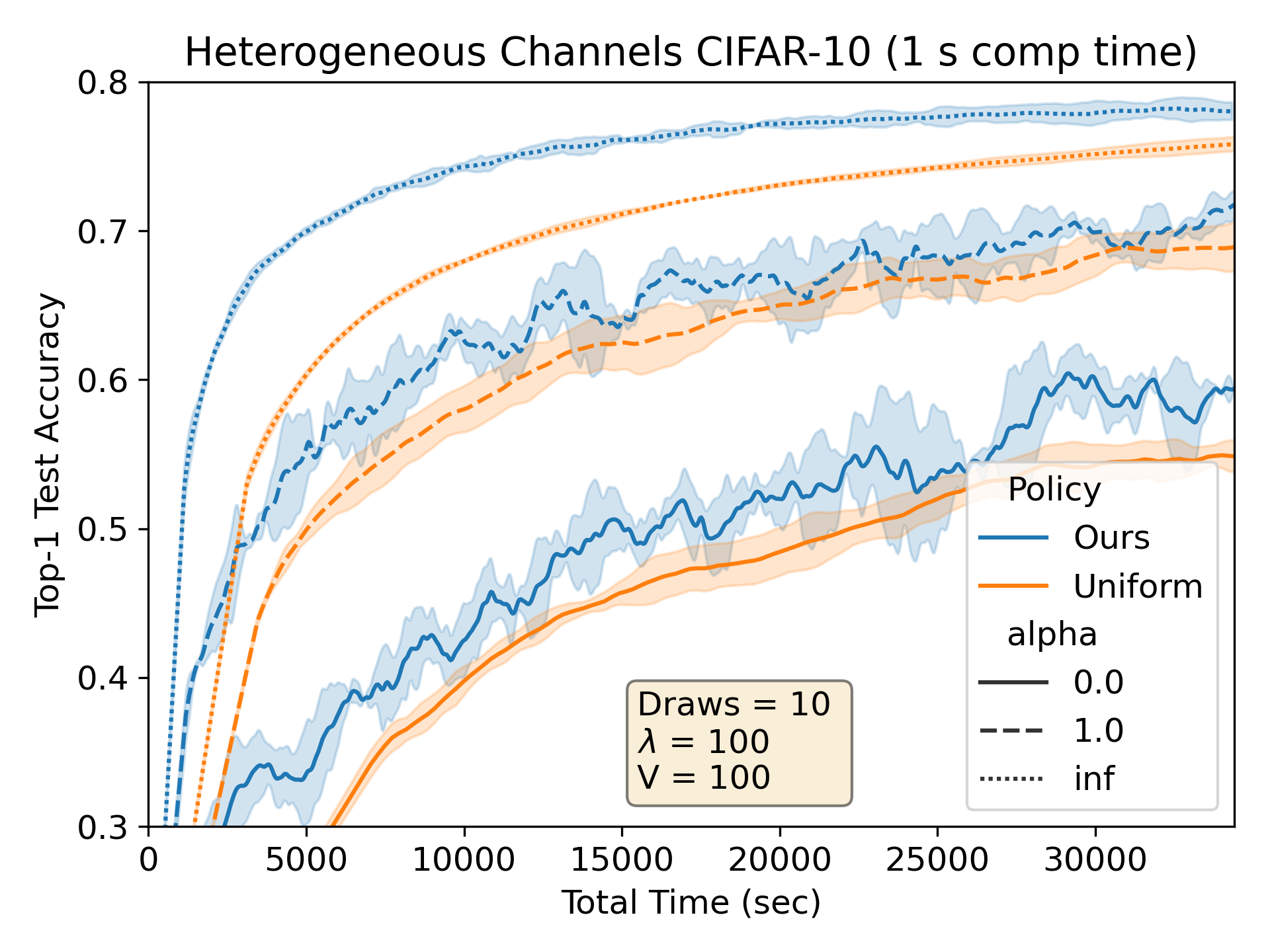}
        \caption{Testing accuracy over time for varying levels of data heterogeneity.}
        \label{fig:Het_CIFAR10_nonIID_acc}
    \end{subfigure}
    \hfill
    \begin{subfigure}[b]{0.475\linewidth}
        \centering
        \includegraphics[width=1\linewidth]{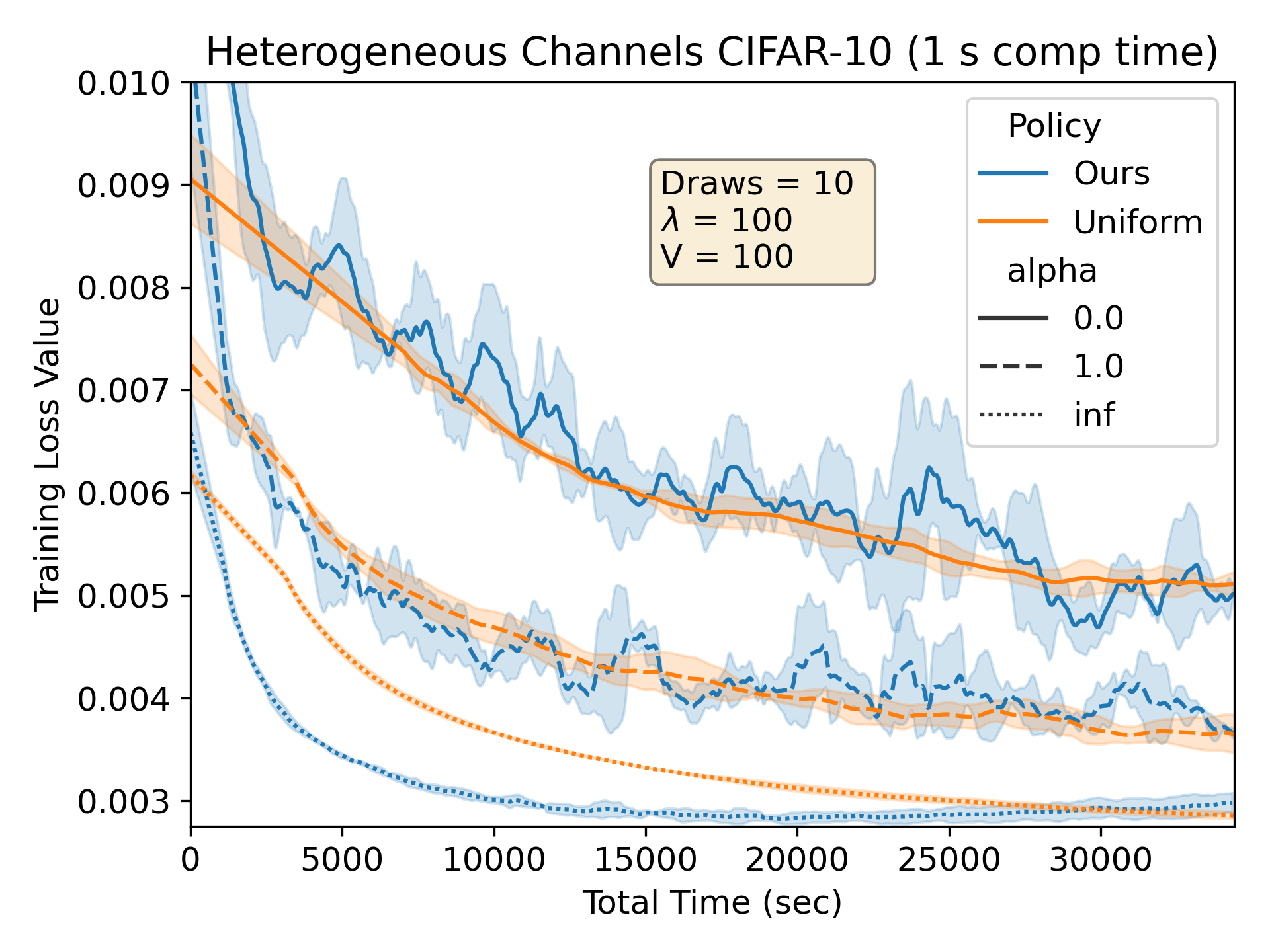}
        \caption{Training loss over time for varying levels of data heterogeneity.}
        \label{fig:Het_CIFAR10_nonIID_loss}
    \end{subfigure}
    \caption{Convergence for heterogeneous data.}
    \label{fig:Het_CIFAR10_nonIID}
\end{figure}

Finally, in Fig. \ref{fig:Het_CIFAR10_IID_iter}, we plot the same results as in Fig. \ref{fig:CIFAR10} but in terms of communication rounds rather than wall-clock time.
This is shown to demonstrate how cases like $m=1$ take more iterations to converge and are slowest in the traditional accuracy per iteration/epoch convergence metric.
However, since these iterations can occur so quickly with less time spent communicating in each round, these cases are much better in practice.
As computation time increases and begins to dominate the total time, though, the trends over total time will begin to match the ones shown here.
Although, we reiterate again that simply setting $m=N=100$ does not guarantee the best performance when computation is high as the generalization of minibatch sizes comes into play \cite{keskar2016large}.

\begin{figure}
    \centering
    \begin{subfigure}[b]{0.49\linewidth}
        \centering
        \includegraphics[width=1\linewidth]{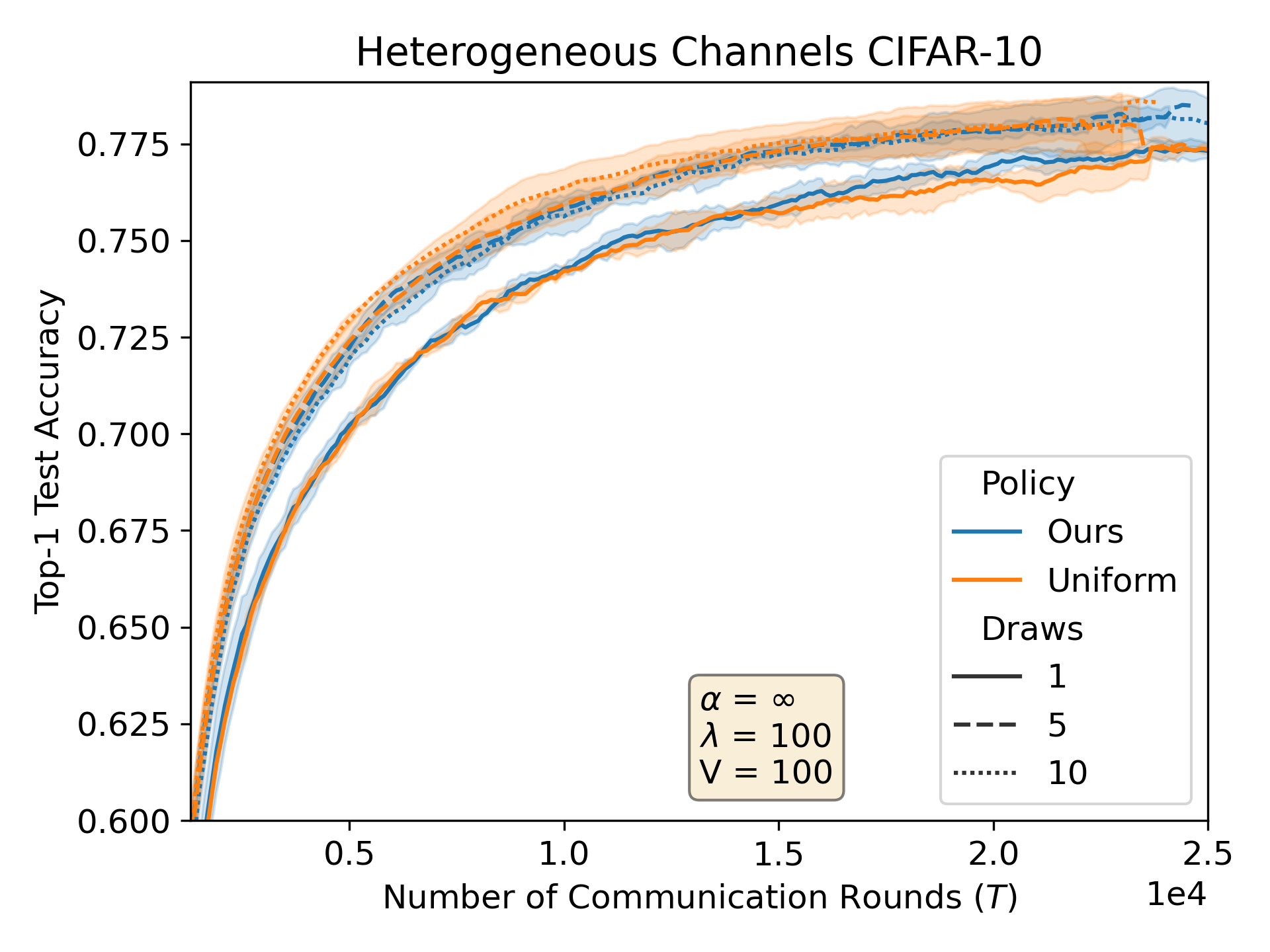}
        \caption{Testing accuracy per communication round ($T$).}
        \label{fig:Het_CIFAR10_IID_iter_acc}
    \end{subfigure}
    \hfill
    \begin{subfigure}[b]{0.49\linewidth}
        \centering
        \includegraphics[width=1\linewidth]{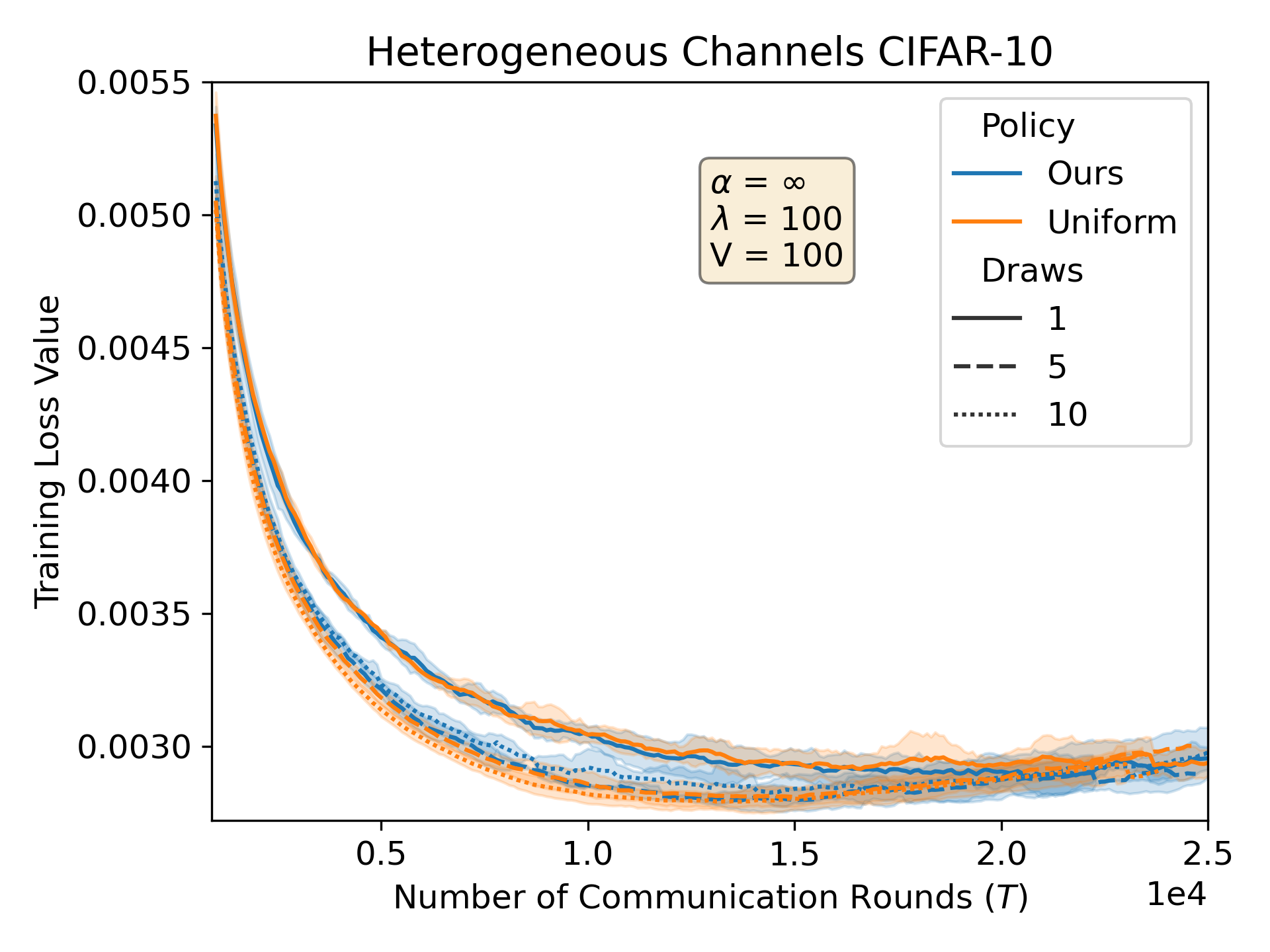}
        \caption{Training loss per communication round ($T$).}
        \label{fig:Het_CIFAR10_IID_iter_loss}
    \end{subfigure}
    \caption{Convergence over communication rounds/iterations.}
    \label{fig:Het_CIFAR10_IID_iter}
\end{figure}

\subsection{Homogeneous Channels}
In this section, we briefly explore the convergence behavior for homogeneous channels where each device has the same Rayleigh fading parameter of $\sigma=1$.
We produce a table of optimal hyperparameters in Table \ref{tab:opt_config_hom} and plot convergence results in Figure \ref{fig:Hom_CIFAR10}.
We plot in terms of top-1 testing accuracy and training loss and for two different levels of data heterogeneity.
The table contains the same trends as in Table \ref{tab:opt_config}.
In Fig. \ref{fig:Hom_CIFAR10}, it can be seen that our algorithm still \emph{outperforms the uniform baseline}, especially for \iid~data, but the speedup is not as significant.
For example, the time to target accuracy of $0.775$ for $\alpha=\infty$ and $m=1$, achieves a $1.24$x speedup.
For fair comparison, we picked $m$ such that both selection policies are optimal for the same value according to Table \ref{tab:opt_config_hom}.
For different levels of $m$, the speedup is more significant since the probability of choosing a slow device increases with more draws when using the uniform sampling policy.
For example, when $m=10$, there is a speed up of $6.7\times$ (not pictured here) over uniform as compared to $8.5\times$ in the heterogeneous channels case. 

\begin{table}[h]
    \centering
    \caption{Best hyperparameters from experiments for different computation time ranges and levels of data heterogeneity.}
    \begin{tabular}{ |c||c|c|c| }
         \hline
         \multicolumn{4}{|c|}{Homogeneous Channel Gain} \\
         \hline
         \multicolumn{1}{|c||}{}&\multicolumn{3}{c|}{Ours}\\
         \hline
         Case& Comp Times (s) & Draws ($m$) & $\lambda$ \\
         \hline
         $\alpha=\infty$, & $0 - 2.48$   &  $1$  & $100$ \\
         target $=0.775$  &   $2.48 - 13.1$  & $5$   & $100$  \\
         & $13.1 - \infty$  & $10$ & $1$  \\
         \hline
         $\alpha=1$, & $0 - 0.69$  & $1$  & $100$ \\
         target $=0.74$ & $0.69 - 10.1$ & $10$ & $100$\\
         & $10.1 - \infty$ & $10$ & $1$ \\
         \hline
         \hline
         \multicolumn{1}{|c||}{}&\multicolumn{3}{c|}{Uniform}\\
         \hline
         Case& Comp Times (s) & Draws ($m$) &  \\
         \hline
         $\alpha=\infty$, & $0 - 2.5$ & $1$ & \\
         target $=0.775$ & $2.5 - \infty$  &  $5$ & \\
         \hline
         $\alpha=1$,  & $0 - 1.3$  &  $1$ &   \\
         target $=0.73$ & $1.8 - 18$  &  $5$ &  \\
         & $18 - \infty$  & $10$ &  \\
         \hline
    \end{tabular}
    \label{tab:opt_config_hom}
\end{table}

Since all devices are equally likely to have good channels in the homogeneous channel case, device selection in our algorithm is more uniform.
Therefore, we expect the impact of data heterogeneity to be lessened as compared to the heterogeneous channel case.
However, this does not appear to be the case.
The reasoning for this is that even though the devices with higher $\sigma$ are chosen more often in the heterogeneous case (see Section \ref{sec:effectV}), all $10$ classes are still present among those top choices.
So, even with skewed sampling, the algorithm is learning from all classes.
If the distribution of classes is dependent on channel quality, however, then the decrease in convergence would be expected to be more pronounced.

\begin{figure}
    \centering
    \begin{subfigure}[b]{0.49\linewidth}
        \centering
        \includegraphics[width=1\linewidth]{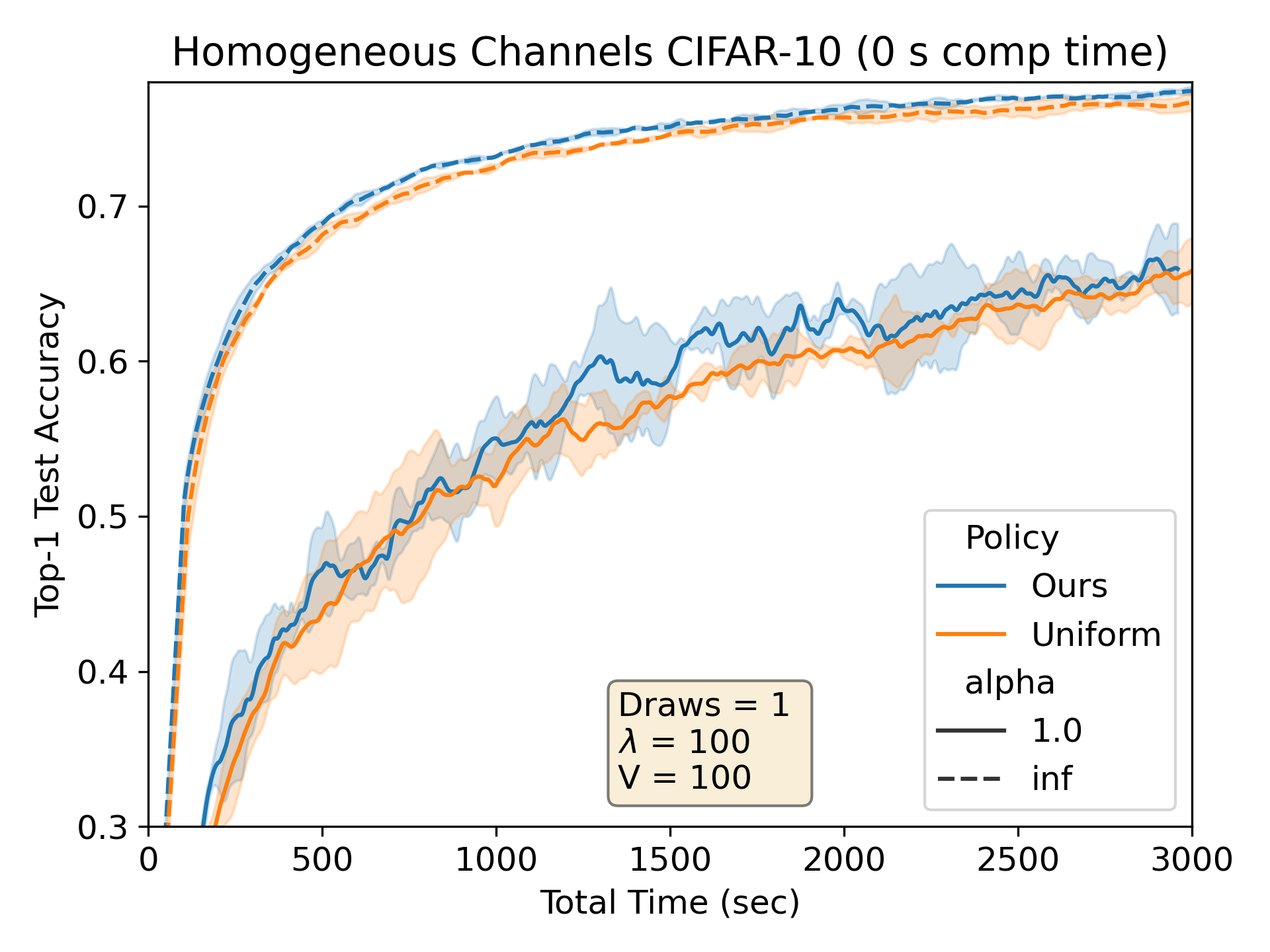}
        \caption{Testing accuracy over time.}
        \label{fig:Hom_CIFAR10_acc}
    \end{subfigure}
    \hfill
    \begin{subfigure}[b]{0.49\linewidth}
        \centering
        \includegraphics[width=1\linewidth]{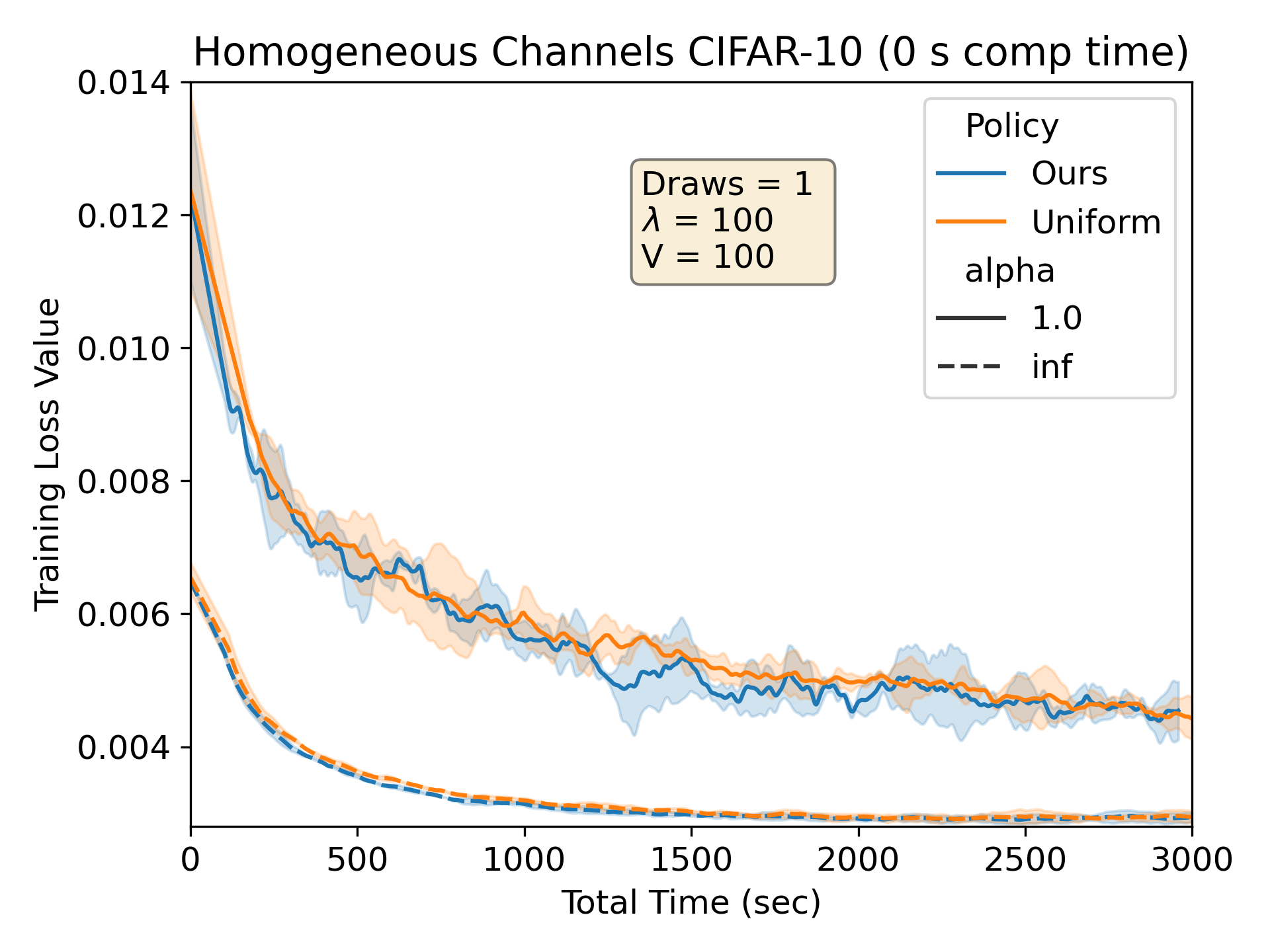}
        \caption{Training loss over time.}
        \label{fig:Hom_CIFAR10_loss}
    \end{subfigure}
    \caption{Convergence for homogeneous channels.}
    \label{fig:Hom_CIFAR10}
\end{figure}

\subsection{Additional Convergence and Statistical Insights}\label{sec:effectV}
In this section, we further explore the behavior of our client sampling algorithm.
First, we show how the choice of parameter $V$ affects convergence of satisfying the time average power constraint.
From \eqref{eqn:optimGap}, we know that $V$ controls the gap in optimality where larger values result in a smaller gap.
Normally, $V$ also conversely controls the queue backlog lengths, but since we only deal with virtual queues, the behavior is slightly different.
In Figure \ref{fig:effectOfV}, we plot the expected time average transmit power $\frac{1}{T} \sum_{t=0}^{T-1} {P_n^t q_n^t}$ over the course of multiple communication rounds.
We see here that increased $V$ causes the constraint of $\bar{P}=1$ to be satisfied at a slower rate.
For example, $V=10^{5}$ does not reach $\bar{P}=1$ within a realistic number of rounds.
This is important because even though it is guaranteed to converge asymptotically, the FL training will have to terminate in finite time in practice.
Similarly for $V=1$, the constraint may be violated at a given finite termination time $T$ due to its oscillatory behavior despite offering the worst optimality gap. 
\emph{Our algorithm sacrifices initial constraint violation in finite time in order to make early gains in performance.}
The improvements over uniform are not solely attributed to this, however.
For the previous experiments, we chose $V=100$ since it satisfies the constraint without extreme advantage over uniform.

\begin{figure}
    \centering
    \includegraphics[width=.6\linewidth]{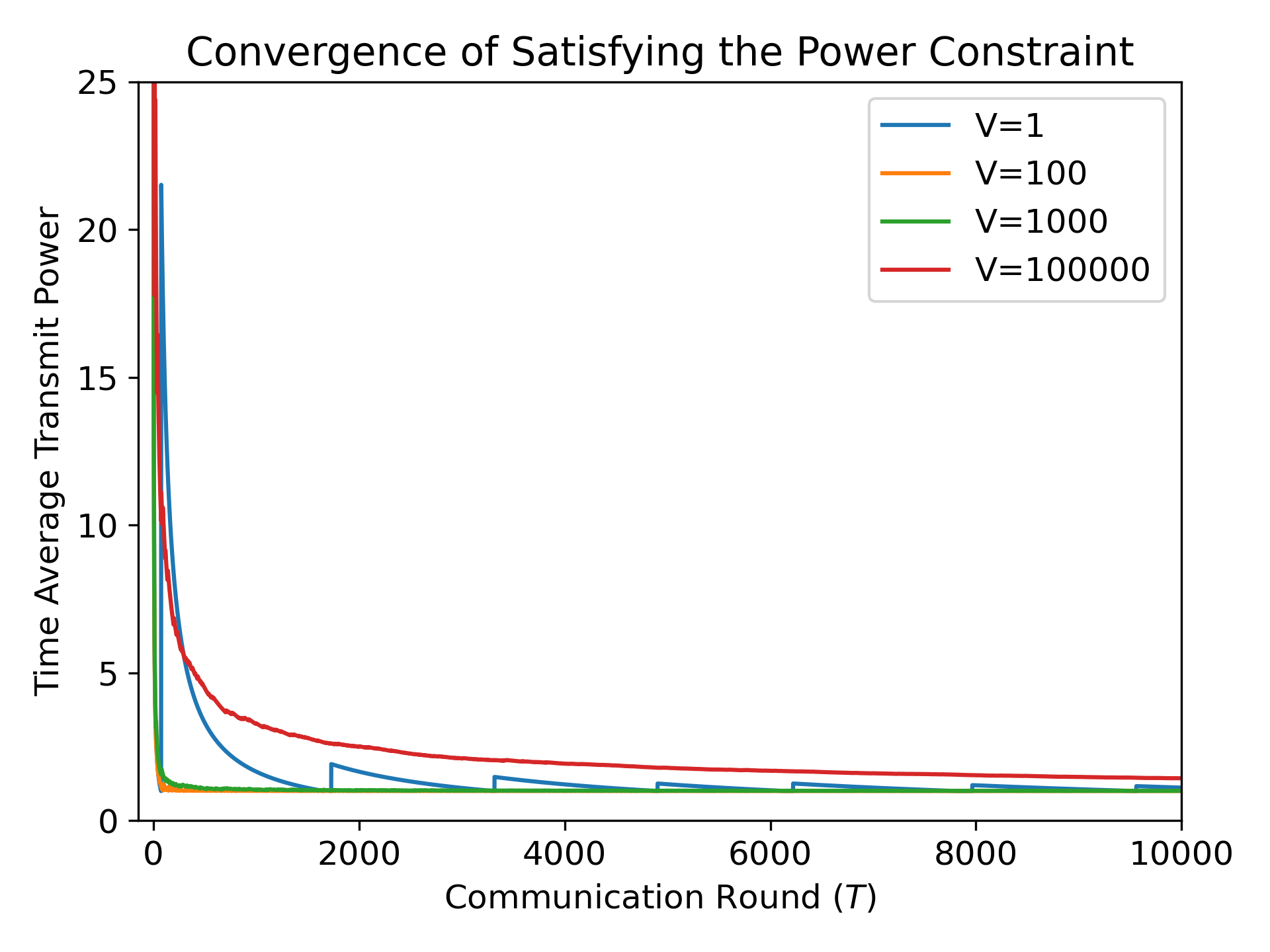}
    \caption{The convergence of the constraint for different values of $V$. The larger the $V$, the more rounds it takes until the constraint is satisfied. Here, the constraint is $\bar{P}_n = 1 $ for all $n$.}
    \label{fig:effectOfV}
\end{figure}

Last, we examine the device selection statistics of our algorithm versus uniform in Fig. \ref{fig:dev_stat}.
Since we mainly plotted the $\lambda=100$ case in the previous sections, we show here how other values affect the device selection process.
In Fig. \ref{fig:dev_freq}, we show a histogram of the selection frequency of all $N=100$ devices in the heterogeneous case for various $\lambda$ values.
The $x$ axis displays each device's Rayleigh fading parameter in increasing order.
When $\lambda$ is larger, minimizing communication time is prioritized over the convergence bound.
Thus, for $\lambda=100$, there is a large skew in selecting the devices with better channels on average.
As $\lambda$ decreases, the selection becomes more uniform where $\lambda=1$ and uniform have similar distributions.
Next, in Figure \ref{fig:dev_per_round}, we investigate how $\lambda$ affects the number of devices chosen per round when using a sampling \emph{with} replacement policy.
Since a device can be drawn more than once, the actual number of devices scheduled in a given round may not equal $m$, even in the uniform case.
Here, you can see that lower $\lambda$ results in probabilities closer to uniform, whereas when $\lambda=100$, it is much more common to schedule only $3$ to $5$ devices despite drawing $m=10$ times.
Through the selection probabilities, \emph{our algorithm is able to adaptively change how many devices are chosen depending on current channel conditions}.

\begin{figure}
    \centering
    \begin{subfigure}[t]{0.492\linewidth}
    \centering
        \includegraphics[width=\linewidth]{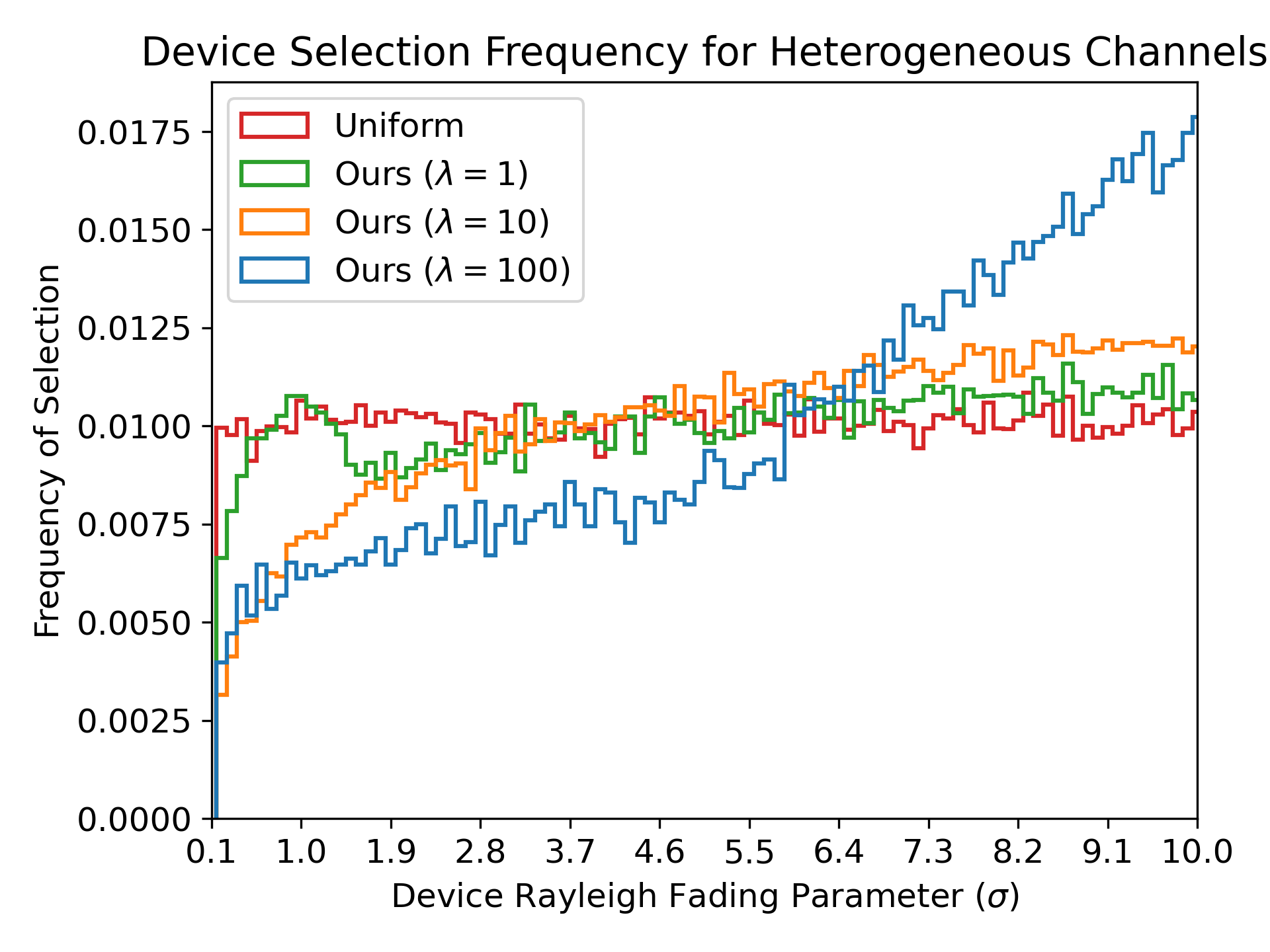}
        \subcaption{The frequency of selection for each device for different $\lambda$ and policy. Each device's Rayleigh fading parameter is shown on the x-axis.}
        \label{fig:dev_freq}
    \end{subfigure}
    \hfill
    \begin{subfigure}[t]{0.492\linewidth}
        \centering
        \includegraphics[width=\linewidth]{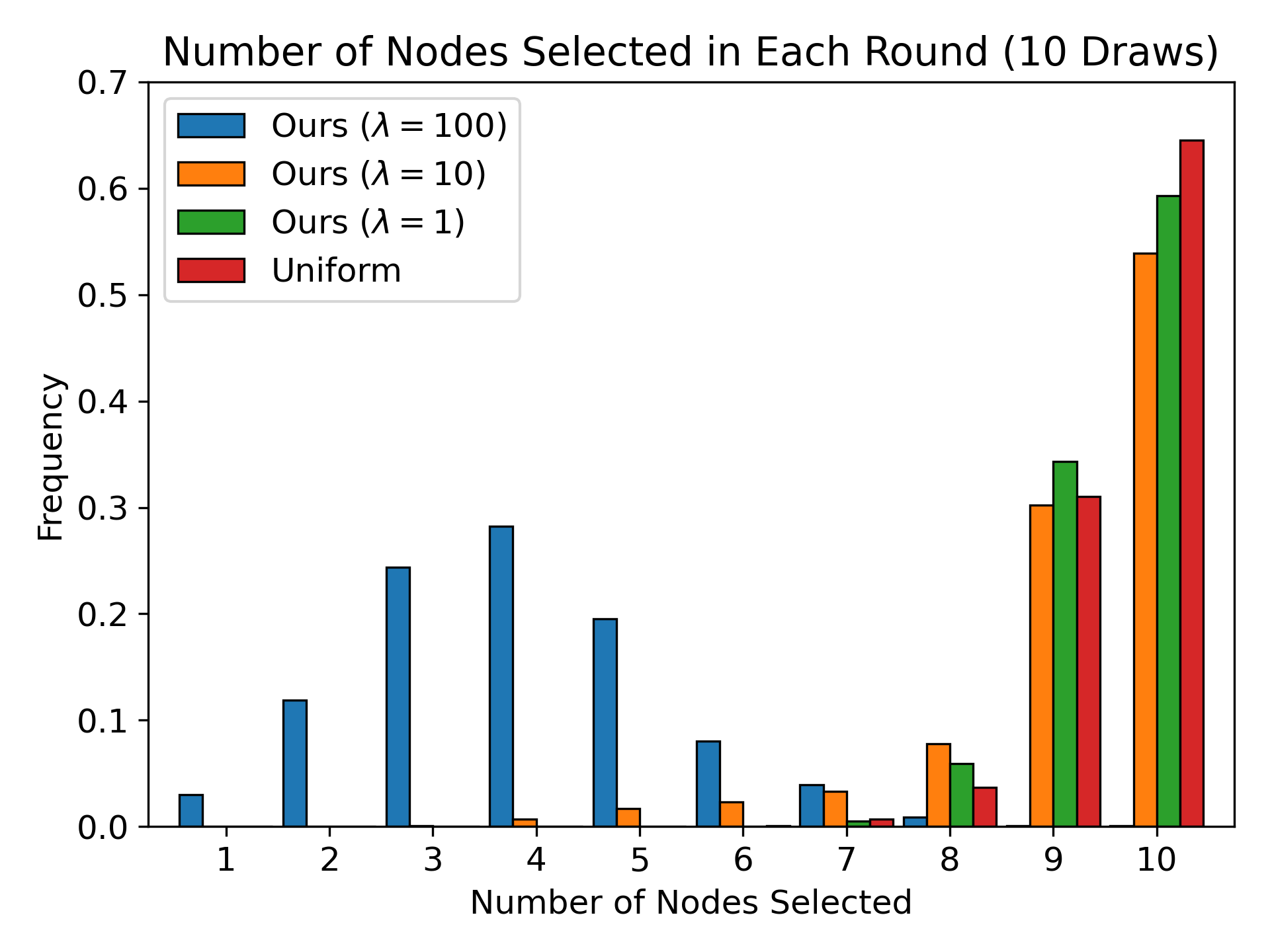}
        \subcaption{The probability of the number of devices chosen per round via sampling \emph{with} replacement for varying $\lambda$ and policy.}
        \label{fig:dev_per_round}
    \end{subfigure}
    \caption{Device selection statistics.}
    \label{fig:dev_stat}
\end{figure}
\section{Conclusion}
In this paper, we explored the impact of device scheduling over wireless links in federated learning and proposed a novel scheduling algorithm that leverages stochastic optimization theory. 
Our experiments saw up to $8.5$x speedup over the uniform baselines in some cases.
The first contribution of our work was developing a convergence bound for non-convex FL loss functions with arbitrary device participation probabilities without a bounded gradient assumption.
We showed that it recovers the state-of-the-art convergence rate with linear speedup.
We then used the bound to formulate a time average optimization problem with a time average power constraint that can be solved using the Lyapunov drift-plus-penalty framework.
The resulting online algorithm can be solved at each communication round without requiring knowledge of channel statistics.
Our experiments show impressive performance over the uniform baseline in terms of wall-clock time.
We explored many scenarios such varying levels of data and channel heterogeneity as well as varying computation time and hyperparameters.
Depending on the scenario, we showed that it is sometimes better to perform quicker, less informative updates than slower, more informative updates as seen in the fantastic performance of the $m=1$ regime for low computation time.
There are very interesting trade-offs to consider when the cost of each training iteration is variable which is the subject of future research.


\appendix

We use the following inequalities throughout the convergence proof.\\
\textbf{Preliminary (in)equalities.}
We will use Jensen's inequality:
\begin{align}
\begin{array}{cc}
     \normsq{ \frac{1}{M}\sum_{m=1}^M \y_m } \leq \frac{1}{M}\sum_{m=1}^M \normsq{\y_m }
\end{array}
     \\
\begin{array}{cc}
    \normsq{ \sum_{m=1}^M \y_m } &\leq M \sum_{m=1}^M \normsq{ \y_m }
\end{array},
\end{align}
and the Peter-Paul inequality:
\begin{equation}
    \langle \y_1, \y_2 \rangle \leq \frac{\rho \normsq{ \y_1 }}{2} + \frac{\normsq{ \y_2 }}{2\rho}
\end{equation}
for $\rho > 0$.
We also use
\begin{align}\label{eq:innerprod}
    \innerprod{\x,\y} = \frac{1}{2}\left(\normsq{\x}+\normsq{\y}-\normsq{\x-\y}\right)
\end{align}

\subsection{Proof of Theorem \ref{thm:1}}\label{sec:appendix}
First, we note that via Algorithm \ref{alg:fedavg}, we have
\begin{align*}
    \x_{t+1}-\x_t 
    &= \frac{1}{N}\sum_{n=1}^N \frac{\Identity_n^t}{q_n^t} (\y^n_{t,K} - \y^n_{t,0})\\
    &= - \frac{\gamma}{N}\sum_{n=1}^N \frac{\Identity_n^t}{q_n^t} \sum_{i=0}^{K-1}  \g_n(\y^n_{t,i}) .
\end{align*}
Let $\Expect_t[\cdot] := \Expect[\cdot|x_t,\{q_n^t\}]$, which denotes the expectation over the randomness of SGD \emph{and} client sampling.
From $L$-smoothness, we have
\begin{align}
    &\Expectt{f(\x_{t+1})} \nonumber\\
    &\leq f(\x_t) +\innerprod{\nabla f(\x_t), \Expectt{\x_{t+1} - \x_{t}}} \nonumber \\
    &\qquad + \frac{L}{2} \Expectt{\normsq{\x_{t+1} - \x_{t}}}\nonumber\\
    &= f(\x_t) - \innerprod{\nabla f(\x_t), \Expectt{\frac{1}{N}\sum_{n=1}^N \frac{\Identity_n^t}{q_n^t} \sum_{i=0}^{K-1}  \gamma\g_n(\y^n_{t,i})}}\nonumber\\
    & \qquad + \frac{L\gamma^2}{2N^2} \Expectt{\normsq{\sum_{n=1}^N \frac{\Identity_n^t}{q_n^t} \sum_{i=0}^{K-1}  \g_n(\y^n_{t,i})}}\nonumber\\
    &\overset{(a)}{=} f(\x_t) - \innerprod{\nabla f(\x_t), \Expectt{\frac{\gamma}{N}\sum_{n=1}^N  \sum_{i=0}^{K-1}  \nabla f_n(\y^n_{t,i})}}\nonumber \\
    & \qquad + \frac{L\gamma^2}{2N^2} \Expectt{\normsq{\sum_{n=1}^N \frac{\Identity_n^t}{q_n^t} \sum_{i=0}^{K-1}  \g_n(\y^n_{t,i})}},
    \label{eq:convergence_proof_1}
\end{align}
where (a) uses the independence between $\Identity_n^t$ and $\g_n$, the fact that $\Expectcond{\Identity_n^t}{\x_t} = \Expectbracket{\Identity_n^t} = q_n^t$, and the total expectation $\Expectcond{\g_n(\y^n_{t,i})}{\x_t} = \Expectcond{\Expectcond{\g_n(\y^n_{t,i})}{\y^n_{t,i}, \x_t}}{\x_t} =  \Expectcond{\nabla f_n(\y^n_{t,i})}{\x_t}$.

We now bound the terms in \eqref{eq:convergence_proof_1} separately.
We begin with the second term and note that
\begin{align*}
    &-\innerprod{\nabla f(\x_t),\Expectt{\frac{\gamma}{N}\sum_{n=1}^N\sum_{i=0}^{K-1}\nabla f_n(\y^n_{t,i})}} \\
    & \ = -\left\langle \nabla f(\x_t), \gamma\Expectt{\begin{array}{c}\frac{1}{N}\sum_{n=1}^N\sum_{i=0}^{K-1}  \nabla f_n(\y^n_{t,i}) \\ - K\nabla f(\x_t) + K\nabla f(\x_t)\end{array}}  \right\rangle \\
    &\ = \innerprod{\nabla f(\x_t), -\gamma \Expectt{\frac{1}{N}\sum_{n=1}^N\sum_{i=0}^{K-1}\nabla f_n(\y^n_{t,i}) \!-\! K \nabla f(\x_t)  }} \\
    & \qquad - \gamma K \normsq{\nabla f(\x_t)} \\
    &\ = \innerprod{\!\sqrt{\gamma K}\nabla f(\x_t), - \frac{\sqrt{\gamma}}{N \sqrt{K}} \Expectt{\sum_{n=1}^N\sum_{i=0}^{K-1}  \!\left({\setlength\arraycolsep{-1pt}\begin{array}{l}\nabla f_n(\y^n_{t,i}) \\ \ - \nabla f_n(\x_t) \end{array}}\right) }} \\
    & \qquad- \gamma K \normsq{\nabla f(\x_t)} \\
    &\ \overset{(a)}{=} \frac{\gamma}{2 N^2 K}\Expectt{\normsq{\sum_{n=1}^N\sum_{i=0}^{K-1}\left(\nabla f_n(\y^n_{t,i}) - \nabla f_n(\x_t)\right) }} \\
    & \qquad - \frac{\gamma}{2 N^2 K}\Expectt{\normsq{\sum_{n=1}^N\sum_{i=0}^{K-1}\nabla f_n(\y^n_{t,i})}} \\
    & \qquad -\frac{\gamma K}{2}\normsq{\nabla f(\x_t)} \\
    &\  \overset{(b)}{\leq} \frac{\gamma K L^2}{2}\left( 5K\gamma^2\left(\nu^2+6K\epsilon^2 \right) + 30K^2\gamma^2 \normsq{\nabla  f(\x_t)} \right)\\
    & \qquad - \frac{\gamma}{2 N^2 K}\Expectt{\normsq{\sum_{n=1}^N\sum_{i=0}^{K-1}\nabla f_n(\y^n_{t,i})}} \\
    & \qquad - \frac{\gamma K}{2}\normsq{\nabla f(\x_t)} \\
    &\ = \frac{ 5 \gamma^3 K^2 L^2}{2}\left(\nu^2+6K\epsilon^2 \right) \\ 
    & \qquad + \left(15\gamma^3 K^3 L^2 -\frac{\gamma K}{2}\right) \normsq{\nabla f(\x_t)}\\
    & \qquad - \frac{\gamma}{2 N^2 K}\Expectt{\normsq{\sum_{n=1}^N\sum_{i=0}^{K-1}\nabla f_n(\y^n_{t,i})}},
\end{align*}
where (a) is due to \eqref{eq:innerprod} and (b) is from
\begin{align}
    &\Expectt{\normsq{\sum_{n=1}^N\sum_{i=0}^{K-1}\left(\nabla f_n(\y^n_{t,i}) - \nabla f_n(\x_t)\right) }} \nonumber\\
    &\qquad \leq NK\sum_{n=1}^N\sum_{i=0}^{K-1}\Expectt{\normsq{\left(\nabla f_n(\y^n_{t,i}) - \nabla f_n(\x_t)\right) }} \nonumber\\
    &\qquad \leq NKL^2\sum_{n=1}^N\sum_{i=0}^{K-1} \Expectt{\normsq{\x_t - \y^n_{t,i}}} \nonumber \\
    &\qquad \leq N^2K^2L^2 \left({\setlength\arraycolsep{-1pt}\begin{array}{l}5K\gamma^2\left(\nu^2+6K\epsilon^2 \right) \\ \qquad\qquad + \, 30K^2\gamma^2 \normsq{\nabla  f(\x_t)} \end{array}}\right)
\end{align}
which utilizes Lemma \ref{lm:lemma2} for the final inequality.

Next, we bound the third term in \eqref{eq:convergence_proof_1} as
\begin{align*}
    &\Expectt{\normsq{\sum_{n=1}^N \frac{\Identity_n^t}{q_n^t} \sum_{i=0}^{K-1}  \g_n(\y^n_{t,i})}} \\
    & \ = \Expectt{\normsq{\sum_{n=1}^N \sum_{i=0}^{K-1} \frac{\Identity_n^t}{q_n^t}  \left(\g_n(\y^n_{t,i}) \!-\! \nabla f_n(\y^n_{t,i}) \!+\! \nabla f_n(\y^n_{t,i})\right)}} \\
    & \ \leq 2\Expectt{\normsq{\sum_{n=1}^N \sum_{i=0}^{K-1} \frac{\Identity_n^t}{q_n^t}  \nabla f_n(\y^n_{t,i})}} \\
    &\qquad + 2\Expectt{\normsq{\sum_{n=1}^N \sum_{i=0}^{K-1} \frac{\Identity_n^t}{q_n^t}  \left(\g_n(\y^n_{t,i}) \!-\! \nabla f_n(\y^n_{t,i})\right)}} \\ 
    & \ \overset{(a)}{=} 2\Expectt{\normsq{\sum_{n=1}^N \sum_{i=0}^{K-1} \frac{\Identity_n^t}{q_n^t}  \nabla f_n(\y^n_{t,i})}} \\
    &\qquad + 2\sum_{n=1}^N\sum_{i=0}^{K-1}\Expectt{\normsq{  \frac{\Identity_n^t}{q_n^t}  \left(\g_n(\y^n_{t,i}) \!-\! \nabla f_n(\y^n_{t,i})\right)}} \\ 
    & \ = 2\Expectt{\normsq{\sum_{n=1}^N \sum_{i=0}^{K-1} \frac{\Identity_n^t}{q_n^t} \nabla f_n(\y^n_{t,i})}}\\
    &\qquad+ 2\sum_{n=1}^N \sum_{i=0}^{K-1}\frac{1}{q_n^t} \Expectt{\normsq{\g_n(\y^n_{t,i}) - \nabla f_n(\y^n_{t,i})}} \\
    & \ \leq  2\Expectt{\normsq{\sum_{n=1}^N \sum_{i=0}^{K-1} \frac{\Identity_n^t}{q_n^t} \nabla f_n(\y^n_{t,i})}} + 2 N K Q_t \nu^2,
\end{align*}
where (a) is due to the fact that $\{\g_n(\y^n_{t,i}) - \nabla f_n(\y^n_{t,i})\}$ is independent over $n$ and is a martingale difference sequence over $i$ such that $\Expectt{\normsq{\sum \x}} = \sum\Expectt{\normsq{\x}} $, \cite[Lemma 4]{karimireddy2020scaffold}.
The final inequality is from Assumption \ref{asmp:boundedGradNoise} and we define $Q_t = \frac{1}{N}\sum_{n=1}^N \frac{1}{q_n^t}$.

Before returning to \eqref{eq:convergence_proof_1}, we note a few more inequalities.
Let $\z_n = \sum_{i=0}^{K-1}  \nabla f_n(\y^n_{t,i})$ and suppose that there exists a $\qmin \leq q_n^t$ for all $n,t$.
Then, we have
\begin{align}
   \Expectt{\normsq{\sum_{n=1}^N \frac{\Identity_n^t}{q_n^t} \z_n}} &= \sum_{i, j}^N \frac{1}{q_i^t q_j^t}\Expectt{\Identity_i^t\Identity_j^t}\Expectt{\innerprod{ \z_i, \z_j}} \nonumber\\
    &\leq \sum_{i, j}^N \frac{1}{\sqrt{q_i^t q_j^t}} \Expectt{\innerprod{ \z_i, \z_j}} \nonumber \\
    &\leq \sum_{i, j}^N \frac{1}{\qmin} \Expectt{\innerprod{ \z_i, \z_j}} \nonumber\\
    &= \frac{1}{\qmin}\Expectt{\normsq{\sum_{n=1}^N \z_n}} \label{eq:tn1_dep}
\end{align}
where we use the fact that sampling and gradient estimation are independent, and we use Cauchy-Schwarz inequality such that
\begin{align*}
    \Expectt{\Identity_i^t\Identity_j^t} &\leq\sqrt{\Expectt{\left(\Identity_i^t\right)^2}\Expectt{\left(\Identity_j^t\right)^2}}\\
    &=\sqrt{\Expectt{\Identity_i^t}\Expectt{\Identity_j^t}}\\
    &= \sqrt{q_i^t q_j^t}.
\end{align*}

Returning to \eqref{eq:convergence_proof_1} and applying the previous bounds, we have
\begin{align}
    &\Expectt{f(\x_{t+1})} \nonumber \\
    &\quad \leq f(\x_t) - \innerprod{\nabla f(\x_t), \Expectt{\frac{\gamma}{N}\sum_{n=1}^N  \sum_{i=0}^{K-1}  \nabla f_n(\y^n_{t,i})}}\nonumber\\
    &\qquad\quad + \frac{L\gamma^2}{2N^2} \Expectt{\normsq{\sum_{n=1}^N \frac{\Identity_n^t}{q_n^t} \sum_{i=0}^{K-1}  \g_n(\y^n_{t,i})}}\nonumber\\
    &\quad\leq f(\x_t) + \frac{ 5 \gamma^3 K^2 L^2}{2}\left(\nu^2+6K\epsilon^2 \right)\nonumber\\
    &\qquad\quad - \frac{\gamma}{2 N^2 K}\Expectt{\normsq{\sum_{n=1}^N\sum_{i=0}^{K-1}\nabla f_n(\y^n_{t,i})}}\nonumber\\
    &\qquad\quad + \left(15\gamma^3 K^3 L^2 -\frac{\gamma K}{2}\right) \normsq{\nabla f(\x_t)} + \frac{L\gamma^2 K Q_t}{N}\nu^2 \nonumber \\
    &\qquad\quad + \frac{L\gamma^2}{N^2}\left(\Expectt{\normsq{\displaystyle\sum_{n=1}^N \sum_{i=0}^{K-1} \frac{\Identity_n^t}{q_n^t} \nabla f_n(\y^n_{t,i})}} \right) \nonumber \\
    & \quad \leq f(\x_t) + \frac{ 5 \gamma^3 K^2 L^2}{2}\left(\nu^2+6K\epsilon^2 \right) + \frac{L\gamma^2 K Q_t}{N}\nu^2 \nonumber\\
    & \qquad\quad+ \left(15\gamma^3 K^3 L^2 -\frac{\gamma K}{2}\right) \normsq{\nabla f(\x_t)}, \label{eq:penultimate_1}
\end{align}
where the last inequality uses \eqref{eq:tn1_dep} such that
\begin{align}
    &\frac{L\gamma^2}{N^2}\Expectt{\normsq{\displaystyle\sum_{n=1}^N \frac{\Identity_n^t}{q_n^t} \z_n }} - \frac{\gamma}{2 N^2 K}\Expectt{\normsq{\sum_{n=1}^N \z_n}} \nonumber\\
    & \quad \leq \frac{\gamma}{N^2} \left(\frac{L\gamma}{\qmin} - \frac{1}{2K}\right) \Expectt{\normsq{\sum_{n=1}^N \z_n}} \nonumber \\
    &\quad = 0 ,
\end{align}
if $\gamma < \frac{\qmin}{2LK}$.

Continuing from \eqref{eq:penultimate_1}, we have
\begin{align}
    &\Expectt{f(\x_{t+1})} \nonumber \\
    & \quad \leq f(\x_t) + \frac{ 5 \gamma^3 K^2 L^2}{2}\left(\nu^2+6K\epsilon^2 \right) + \frac{L\gamma^2 K Q_t}{N}\nu^2 \nonumber\\
    & \qquad\qquad+ \left(15\gamma^3 K^3 L^2 -\frac{\gamma K}{2}\right) \normsq{\nabla f(\x_t)},  \nonumber \\
    & \quad \leq f(\x_t) + \frac{ 5 \gamma^3 K^2 L^2}{2}\left(\nu^2+6K\epsilon^2 \right) + \frac{L\gamma^2 K Q_t}{N}\nu^2 \nonumber\\
    & \qquad\qquad-c\frac{\gamma K}{2} \normsq{\nabla f(\x_t)}, \label{eq:penultimate}
\end{align}
where the final inequality holds since there exists a constant $c>0$ such that $\left(1 - 30\gamma^2 K^2 L^2 \right) > c > 0$ if we assume that $\gamma < \frac{1}{6 LK}$.

Finally, taking total expectation and then rearranging and summing $t$ from $0$ to $T-1$, we have
\begin{align}
    \frac{1}{T}\sum_{t=0}^{T-1}\Expectbracket{\normsq{\nabla f(\x_t)}} \leq & \frac{2\left(\Expectbracket{f(\x_0)} - \Expectbracket{f(\x_{T})}\right)}{c\gamma KT} \nonumber\\
    &\quad + \Phi_1 +  \frac{\Phi_2}{TN} \sum_{t=0}^{T-1} Q_t \, , \label{eqn:thm1Bound}
\end{align}
where $\Phi_1 = \frac{1}{c}5 \gamma^2 K L^2\left(\nu^2+6K\epsilon^2 \right)$ and $\Phi_2 = \frac{2L\gamma\nu^2}{c}$.

\begin{lemma}[{\cite[Lemma 3]{reddi2020adaptive}}] \label{lm:lemma2}
    If $\gamma \leq \frac{1}{8LK}$, then
    \begin{align}
        &\Expectt{\normsq{ \y^n_{t,i} - \x_t}} \nonumber \\
        &\qquad \leq 5K\gamma^2\left(\nu^2+6K\epsilon^2 \right) + 30K^2\gamma^2 \normsq{\nabla  f(\x_t)}. \label{eqn:gradBound2}
    \end{align}
\end{lemma}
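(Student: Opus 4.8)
The plan is to prove this client-drift bound by establishing a one-step recursion for $A_i := \Expectt{\normsq{\y^n_{t,i} - \x_t}}$ and then unrolling it across the $K$ local iterations. First I would use the local update rule to write $\y^n_{t,i+1} - \x_t = (\y^n_{t,i} - \x_t) - \gamma \g_n(\y^n_{t,i})$, noting that $A_0 = 0$ since $\y^n_{t,0} = \x_t$. Decomposing $\g_n(\y^n_{t,i}) = \nabla f_n(\y^n_{t,i}) + \left(\g_n(\y^n_{t,i}) - \nabla f_n(\y^n_{t,i})\right)$ and taking the conditional expectation, the cross term between the deterministic part and the zero-mean noise vanishes by Assumption \ref{asmp:unbiased}, while the noise term contributes at most $\gamma^2 \nu^2$ by Assumption \ref{asmp:boundedGradNoise}. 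This separates the deterministic recursion from the stochastic fluctuation.

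Next I would bound the remaining deterministic quantity $\Expectt{\normsq{(\y^n_{t,i} - \x_t) - \gamma \nabla f_n(\y^n_{t,i})}}$ with the Peter-Paul inequality using a parameter of order $1/K$, which produces a factor $\left(1 + \tfrac{1}{2K-1}\right)$ on $A_i$ together with a coefficient of order $K\gamma^2$ on $\Expectt{\normsq{\nabla f_n(\y^n_{t,i})}}$. To handle the latter I would split $\nabla f_n(\y^n_{t,i}) = \left(\nabla f_n(\y^n_{t,i}) - \nabla f_n(\x_t)\right) + \left(\nabla f_n(\x_t) - \nabla f(\x_t)\right) + \nabla f(\x_t)$ and apply Jensen's inequality, bounding the first piece by $L^2 A_i$ through Assumption \ref{asmp:Lsmooth}, the second by $\epsilon^2$ through Assumption \ref{asmp:boundedGradDiv}, and retaining $\normsq{\nabla f(\x_t)}$ as the driving term. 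Collecting terms yields a linear recursion $A_{i+1} \le \beta A_i + D$ with $\beta = 1 + \tfrac{1}{2K-1} + \mathcal{O}(K\gamma^2 L^2)$ and $D = \mathcal{O}\!\left(K\gamma^2 \epsilon^2 + K\gamma^2 \normsq{\nabla f(\x_t)} + \gamma^2 \nu^2\right)$.

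Finally I would unroll from $A_0 = 0$ to get $A_i \le D \sum_{j=0}^{i-1} \beta^j = D\,\tfrac{\beta^i - 1}{\beta - 1}$. The main obstacle, and precisely where the step-size hypothesis $\gamma \le \tfrac{1}{8LK}$ is consumed, is ensuring that this $K$-fold accumulation costs only a factor of order $K$ rather than growing exponentially: the hypothesis forces $K\gamma^2 L^2 \le \tfrac{1}{64K}$, so that $\beta \le 1 + \mathcal{O}(1/K)$ and hence $\beta^K$ stays bounded by an absolute constant (via $(1 + 1/K)^K \le e$), while $\beta - 1$ being of order $1/K$ turns the geometric sum into a factor of order $K$. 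Multiplying this $\mathcal{O}(K)$ accumulation against $D$ reproduces the $\mathcal{O}(K^2\gamma^2)$ scaling on the $\epsilon^2$ and $\normsq{\nabla f(\x_t)}$ terms and the $\mathcal{O}(K\gamma^2)$ scaling on the $\nu^2$ term; the explicit constants $5$ and $30$ then follow from deliberately loose bounding of the geometric series, with any slack in the Peter-Paul parameter affecting only these constants and not the scaling.
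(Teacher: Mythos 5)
Your proposal is correct and is essentially the same argument as the paper's: the paper itself omits the details, deferring to Lemma~3 of \cite{reddi2020adaptive}, and the recursion-and-unrolling proof you reconstruct (one-step Peter--Paul expansion with parameter of order $1/K$, three-way gradient splitting via Assumptions~\ref{asmp:Lsmooth} and \ref{asmp:boundedGradDiv}, then geometric-sum control using $\gamma \leq \frac{1}{8LK}$) is precisely that cited proof. The only ``variation'' the paper notes---bounding each client $n$ uniformly rather than averaging over clients---is exactly what you do, and it is valid here because Assumptions~\ref{asmp:boundedGradNoise} and \ref{asmp:boundedGradDiv} hold uniformly in $n$.
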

\begin{proof}
    This lemma is a slight variation of Lemma 3 in \cite{reddi2020adaptive} where instead of taking the arithmetic average over all $N$ clients, we instead bound uniformly across all $n$.
    This change is straightforward, so we omit details for space.
\end{proof}

\subsection{Proof of Theorem \ref{thm:Lyapunov}}\label{sec:appendix2}
Since the objective function is an independent sum over $n$ and has independent boundary constraints for $P_n^t$, we can find the minimizing values $P_n^t$ by finding the roots of the gradient of the objective function and ensuring that they are within the upper and lower bounds.
If no roots are within that set, one of the end points will minimize the function, so we only need to check those points.

To find the roots, we compute the gradient of the objective function for each $n$ in \eqref{eqn:minimization}
\begin{align}\label{eqn:gradient}
    &\nabla f(q_n^t,P_n^t) = \nonumber \\
    &\begin{bmatrix}
    -\frac{V}{N (q_n^t)^2} + \frac{ V\lambda\ell}{B\log_2\left(1+|h_n^t|^2 \frac{P_n^t}{N_0}\right) }+Z_n^tP_n^t\\
     \frac{-V\lambda\ell|h_n^t|^2}{N_0 B\left(1+|h_n^t|^2 \frac{P_n^t}{N_0}\right)\left(\log_2\left(1+|h_n^t|^2 \frac{P_n^t}{N_0}\right)\right)^2} q_n^t +Z_n^t q_n^t
    \end{bmatrix}.
\end{align}
We first look at the partial derivative with respect to $P_n^t$ and note that setting it equal to zero and dividing by $q_n^t$ gives
\begin{align*}
    0=&\nonumber\\
    &\frac{-V\lambda\ell |h_n^t|^2/(N_0 B)}{\left(1+|h_n^t|^2 \frac{P_n^t}{N_0}\right)\left(\log_2\left(1+|h_n^t|^2 \frac{P_n^t}{N_0}\right)\right)^2} +Z_n^t 
\end{align*}
which \emph{does not depend} on $q_n^t$.
Next, let $A=\frac{V\lambda\ell |h_n^t|^2 \left(\log(2)\right)^2}{N_0 B Z_n^t}$ and $x=1+|h_n^t|^2 \frac{P_n^t}{N_0}$, then we have something in the form of
\begin{align*}
    A &= x \left(\log(x)\right)^2 \nonumber  = x \left(\log\left(\sfrac{1}{x}\right)\right)^2 .
\end{align*}
By dividing both sides by $1/4$, letting $x' = \sqrt{\frac{A}{4}} \frac{1}{\sqrt{x}}$, and rearranging, we have
\begin{align*}
    \sqrt{\sfrac{A}{4}}=x' e^{x'}
\end{align*}
that has a known solution of $ x' =W_k \left(\sqrt{\frac{A}{4}}\right)$ where $W_k(\cdot)$ is the Lambert $W$ function which solves $w\exp{w}=z$ for $w$.

To get the critical point for $P_n^t$, we unwrap and substitute $P_n^t = \frac{N_0}{|h_n^t|^2}(x-1)$, to get
\begin{align} \label{eqn:powerOpt2}
    P_n^\textnormal{t,opt} = \frac{N_0}{|h_n^t|^2} \left(\frac{A}{4} W_k\left(\sqrt{\frac{A}{4}}\right)^{-2}-1\right)
\end{align}
which has a single root at $k=0$ since $\sqrt{\frac{A}{4}}\geq 0$.

\bibliographystyle{IEEEtran}
\bibliography{references.bib}

\end{document}